\newcommand{\E}{\textnormal{\textrm{E}}}
\newcommand\myparagraph[1]{\smallskip\noindent\textbf{#1.}}
\newcommand\mysection[1]{\vspace{-1em}\section{#1}\vspace{-0.5em}}
\let\accentvec\vec
\let\vec\accentvec
\title{Differentially Private Sketches for Jaccard Similarity Estimation}
\author{
  Martin Aumüller\orcidID{0000-0002-7212-6476}
  \and 
  Anders Bourgeat
  \and 
  Jana Schmurr
}
\institute{IT University of Copenhagen\\ Copenhagen, Denmark\\ \email{\{maau,anfh,jansc\}@itu.dk}}
\begin{document}

\maketitle

\begin{abstract}
    This paper describes two locally-differential private algorithms for releasing user vectors such that the Jaccard similarity between these vectors can be efficiently estimated. 
    The basic building block is the well known MinHash method. 
    To achieve a privacy-utility trade-off, MinHash is extended in two ways using variants of Generalized Randomized Response and the Laplace Mechanism. 
    A theoretical analysis provides bounds on the absolute error  
    and experiments show the utility-privacy trade-off on synthetic and real-world data. The paper ends with a critical discussion of related work.
\end{abstract}

\mysection{Introduction}

Privacy of user data is becoming an ever increasing need 
for organizations and users alike. Multiple large-scale privacy breaches in the last years
showed how critical and vulnerable most of today's infrastructure is~\cite{dwork2014algorithmic}. 
In particular,
there is dispute about the concept of a \emph{trusted data 
curator} to whom users send their original data, and who uses this data to build models for different tasks such as targeted advertisement.
As Kearns and Roth put it in their recent book about ethical algorithms~\cite{KearnsRoth19}, ``[to] make sure that the effect of these models respect the societal norms that we want to maintain, we need to learn how to design these goals directly into our algorithms.'' In pursue of this goal,  the present paper studies how we can implicitly incorporate privacy into a similarity search system.

The concept of differential privacy as introduced by Dwork et al. in~\cite{DworkMNS06} 
defines privacy in a precise mathematical way that often allows the design 
of efficient randomized algorithms. In the case of an untrusted 
data curator, the concept can be extended to \emph{local differential privacy}, where users themselves run randomized algorithms to 
make their data private before sending it to an untrusted curator.

This paper proposes two randomized mechanisms when users
have a collection of items and are interested in finding their 
similarity with other users under the \emph{Jaccard similarity} in a private manner. The proposed  
algorithms build upon the papers~\cite{Kenthapadi13,Dhaliwal19,yanlocally} 
and a precise account of the relation will be given in the related work section at the end of this paper. In a nutshell, each user starts by applying MinHash as introduced by Broder~\cite{bro97b} with the range compression of Li and K\"onig~\cite{LiK10} (Section~\ref{sec:minhash}) to produce a sketch of their data. It is well known that these sketches can be used to efficiently estimate the original Jaccard similarity. Now, each user applies a local randomization to their sketch to satisfy the notion of differential privacy  as introduced in the next section.  One randomization mechanism is based on the concept of randomized response (Section~\ref{sec:grr}), 
the other mechanism uses the concept of Laplacian noise (Section~\ref{sec:noise:addition}).  
We provide probabilistic bounds on the estimation error of these mechanisms as Theorem~\ref{thm:utility:rr:minhash} and Theorem~\ref{thm:utility:noisy:minhash}. 
A running example of our setting and the mechanisms is provided in Appendix~\ref{app:example}.
The mechanisms will be evaluated in a real-world setting in Section~\ref{sec:experiments}. 
There we will see that they allow for precise similarity estimations if user vectors do not contain too few elements.

We hope that the proposed methods will help in building privacy-preserving similarity search systems with good utility and 
precise privacy guarantees.  

\mysection{(Local) Differential Privacy}
\label{sec:dp}

Differential privacy conveys a precise mathematically definition 
of privacy. It says that a randomized algorithm is private if 
for two ``neighboring'' databases, there must be 
a ``good enough'' probability that the algorithm produces the same output.
Here, a clean definition of \emph{neighboring} is a key criterion and we will 
introduce our notion in the next section.
While differential privacy usually works with a trusted data curator, the 
notion of \emph{local differential privacy} describes the setting in which  the user apply the randomized
algorithm themselves. Thus, the curator never sees the original data. 

\begin{definition}[Sect.~12.1~\cite{dwork2014algorithmic}]
    \label{def:ldp}
    Let $\varepsilon, \delta \geq 0$. Let $\mathcal{A}$ be a randomized algorithm with 
    output space $\mathcal{R}$. $\mathcal{A}$ satisfies
    $(\varepsilon, \delta)$-local differential privacy ($(\varepsilon, \delta)$-LDP), if 
    and only if for any neighboring input $x$ and $y$ we have:
        $\forall v \in \mathcal{R} \colon \Pr[\mathcal{A}(x) = v] \leq e^\varepsilon \Pr[\mathcal{A}(y) = v] + \delta.$
\end{definition}
\noindent Note that $\mathcal{A}$ is run by each individual user.




\mysection{Basic Setup}

Let  $\mathcal{U}$ be a collection of $n$ users and $\mathcal{I}$ be a collection of $m$ items. 
Each user has a subset of the items.
Formally, user $u \in \mathcal{U}$ is associated with 
a bit vector $x_u= (X_1, \ldots, X_m) \in \{0,1\}^m$, where $X_i = 1$ means that item $i$ is 
present in the user's item set. From a practical point of view, such a representation
is often obtained from a real-valued vector $(X'_1,\ldots,X'_m) \in \mathbb{R}^m$ by setting
$X_i = 1$ iff $X'_i \geq t$, for some chosen threshold $t \in \mathbb{R}$. 

This paper will focus on the similarity of user's item sets with regard to their \emph{Jaccard similarity}.
For two vectors $x, y \in \{0,1\}^m$, the Jaccard similarity $J(x,y) = |x \cap y|/|x \cup y|$ 
is the fraction of positions with a common one over the number of positions with at least a one.

We want to release the matrix $M = (x_u)_{u\in \mathcal{U}} \in \{0,1\}^{n\times m}$ in a
locally differential private way. This means that each user locally produces a 
differential private version $\hat{x}_u$ of $x_u$ such that if two vectors $x_u$ and
$y_u$ do not differ by much, there is a good chance that they map to the same output. 
Sending all $\hat{x}_u$ to an untrusted curator, 
we obtain a matrix $\hat{M} = (\hat{x}_u)_{u \in \mathcal{U}}$ that can be
published. 
The utility of this mapping $M \mapsto \hat{M}$ is the ability to recover from any 
two vectors $\hat{x}$ and $\hat{y}$ their original similarity $J(x, y)$.
Since the mapping introduced random noise to preserve privacy, custom similarity 
estimation algorithms are required to solve this task.

\myparagraph{Neighboring Notion}
Throughout this paper, we will often make the assumption that each user vector has at least $\tau \geq 1$ items, i.e., at least $\tau$ bits are set. 
We say that two vectors $x$ and $y$ in $\{0,1\}^m$ are neighboring if they differ in at most $\alpha$ positions. In this case, $J(x,y) \geq 1 - \alpha/\tau$.

\myparagraph{Basic Building Blocks of Differential Privacy}
We review the Laplace mechanism~\cite[Chapter 3.3]{dwork2014algorithmic} to produce differential privacy mechanisms in our context. 
The $\ell_1$-sensitivity $\Delta(f)$ of a function $f\colon\{0,1\}^m \rightarrow \mathbb{R}^K$ is defined as 
$
    \Delta(f) = \max \| f(x) - f(y) \|_1
$, where the maximum is taken over all neighboring bitstrings $x,y$.
Given $f(x) \in \mathbb{R}^K$ and a privacy budget $\varepsilon$, the Laplace mechanism returns the value $f(x) + (Y_1, \ldots, Y_K)$, where each $Y_i$ is drawn independently
from the Laplace distribution with shape parameter $\Delta(f) / \varepsilon$ and mean 0. 
\begin{theorem}[Thm 3.6~\cite{dwork2014algorithmic}]
    \label{thm:laplace}
    The Laplace mechanism preserves $(\varepsilon, 0)$-LDP.
\end{theorem}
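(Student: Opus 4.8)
The plan is to bound the ratio of output densities directly, since $\delta = 0$ forces a pure multiplicative guarantee with no additive slack. Fix neighboring inputs $x, y \in \{0,1\}^m$ and an arbitrary output point $v = (v_1, \ldots, v_K) \in \mathbb{R}^K$, and let $p_x(v)$ denote the density of the mechanism's output at $v$ on input $x$. Writing $b = \Delta(f)/\varepsilon$ for the scale parameter, this density factorizes across coordinates because the noise terms $Y_1, \ldots, Y_K$ are independent, so $p_x(v)$ is a product of $K$ Laplace densities, the $i$-th being $\frac{1}{2b}\exp(-|f(x)_i - v_i|/b)$.

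Next I would form the ratio $p_x(v)/p_y(v)$. The normalizing constants $1/(2b)$ cancel, and the product of exponentials collapses into a single exponential of a sum:
\[
\frac{p_x(v)}{p_y(v)} = \exp\!\left(\frac{1}{b}\sum_{i=1}^K \bigl(|f(y)_i - v_i| - |f(x)_i - v_i|\bigr)\right).
\]
The crux is then to control this sum. By the reverse triangle inequality, each summand satisfies $|f(y)_i - v_i| - |f(x)_i - v_i| \le |f(x)_i - f(y)_i|$, so the whole sum is at most $\|f(x) - f(y)\|_1$, which by the definition of $\ell_1$-sensitivity is at most $\Delta(f)$ for every neighboring pair.

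Substituting $b = \Delta(f)/\varepsilon$ then yields the bound $\exp(\Delta(f)/b) = e^\varepsilon$, which holds for every $v$ and every neighboring pair, establishing $(\varepsilon, 0)$-LDP. I expect the only point requiring genuine care to be the direction of the reverse triangle inequality together with the recognition that $\Delta(f)$ is precisely the quantity capping $\|f(x)-f(y)\|_1$ over all neighboring inputs; the calibration of the noise scale to $\Delta(f)/\varepsilon$ is exactly what makes the worst-case exponent equal $\varepsilon$. Because the argument is a pointwise density comparison valid at every $v$, no event with uncovered probability mass is left over, which accounts for $\delta = 0$. Finally, since in the local setting each user runs the mechanism on their own vector, the neighboring notion is taken over the inputs $x, y \in \{0,1\}^m$ directly, so the same density comparison delivers the \emph{local} guarantee without modification.
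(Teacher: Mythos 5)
Your proof is correct and is exactly the standard argument: the paper itself states this result without proof, citing Theorem~3.6 of Dwork and Roth, and your pointwise density-ratio computation with the reverse triangle inequality and the calibration $b = \Delta(f)/\varepsilon$ is precisely the proof given there. The only implicit step worth noting is that the pointwise bound on densities transfers to all measurable output events by integration, which is what Definition~\ref{def:ldp} formally requires; your argument handles this correctly since the bound holds at every $v$.
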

Another way of preserving $(\varepsilon, 0)$-LDP is via generalized randomized response~\cite{wang2017locally}. The variant used in this paper will be described in Lemma~\ref{lem:grr:on:budget}.


%


\vspace{-1em}
\subsection{Jaccard Similarity Estimation via MinHash}
\label{sec:minhash}

\myparagraph{MinHash} Our approach relies on the \emph{MinHash algorithm} that was first described by Broder in~\cite{bro97b}.  
Choosing a MinHash function $h\colon \{0, 1\}^m \rightarrow [m] := \{1, \ldots, m\}$ 
amounts to choosing a random permutation $\pi$ over $[m]$. 
The hash value of $x \in \{0, 1\}^m$ is the position of the first $1$ in $x$ under $\pi$. 
MinHash has the property that for any pair $x, y \in \{0,1\}^m$, 
we have $\Pr[h(x) = h(y)] = J(x,y)$ where the probability is taken over the random choice of $h$. 
Repeating this construction $K$ times results in an output $(h_1(x), \ldots, h_K(x)) \in [m]^K$.
By linearity of expectation, the value $\frac{1}{K} \sum_{i = 1}^K [h_i(x) = h_i(y)]$ is an unbiased estimator of $J(x,y)$.


\myparagraph{$b$-bit MinHash} 
Li and König described in~\cite{LiK10} the following twist to the
standard MinHash approach. 
For an integer $B \geq 2$, choosing a \emph{range-$B$ MinHash} function amounts to choosing a MinHash function
$h_\text{min}\colon \{0,1\}^m \rightarrow [m]$ 
and a universal hash function~\cite{carter1979universal} $h_\text{uni}\colon [m] \rightarrow [B]$. 
The range-$B$ MinHash function
is $h := h_\text{uni} \circ h_\text{min} \colon \{0,1\}^m \rightarrow [B]$.
This mapping has the property that $\Pr[h(x) = h(y)] = (1 - J(x,y)) 1/B + J(x,y)$, since with probability $J(x,y)$ the MinHash value is identical---which yields a collision---and with probability $1-J(x,y)$ the MinHash value is different but the random mapping generates a collision.
\cite{LiK10} discussed the case $B = 2^b$ for $b \geq 1$ where the
hash function gives a $b$-bit value. 
In this paper we will use their approach for general $B \geq 2$.

\vspace{-1em}
\subsection{Generalized Randomized Response for Close Vectors}
To have a chance for good utility of our mechanisms, 
we will use the additive $\delta$ summand available in LDP (cf.~Def~\ref{def:ldp}) to 
collect cases where the mapping $h$ maps two neighboring user vectors 
far away from each other. We then provide $\varepsilon$-LDP on the remaining cases. We will need two technical lemmata.

\begin{lemma}
    \label{lem:differences}
    Let $x, y \in \{0, 1\}^m$ such that $J(x, y) \geq 1 - \alpha/\tau$. Let $\delta > 0$.
    Let $h_1, \ldots, h_K$ be a collection of $K$ random range-$B$ MinHash functions. 
    Let $x^\ast = (h_1(x), \ldots, h_K(x))$ and $y^\ast = (h_1(y), \ldots, h_K(y))$.
    With probability at least $ 1 - \delta$, the number of positions where $x^\ast$ and $y^\ast$ differ is at most 
    $K (\alpha/\tau) \left(1 - \frac{1}{B}\right) + \sqrt{3 \ln (1/\delta)\left(1 - \frac{1}{B}\right)  K\alpha/\tau }$.
\end{lemma}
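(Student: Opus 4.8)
The plan is to reduce the claim to a concentration bound for a sum of independent indicator variables. For each coordinate $i \in \{1,\ldots,K\}$ let $D_i = [h_i(x) \neq h_i(y)]$ be the indicator that $x^\ast$ and $y^\ast$ disagree in position $i$, and let $D = \sum_{i=1}^K D_i$ be the number of disagreements we wish to bound. Since the $K$ range-$B$ MinHash functions are drawn independently, the $D_i$ are independent Bernoulli variables and $D$ is binomially distributed.

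First I would compute the per-coordinate disagreement probability. By the collision property of range-$B$ MinHash recalled in Section~\ref{sec:minhash}, $\Pr[h_i(x) = h_i(y)] = (1 - J(x,y))/B + J(x,y)$, hence $\Pr[D_i = 1] = (1 - J(x,y))(1 - 1/B)$. Using the hypothesis $J(x,y) \geq 1 - \alpha/\tau$, i.e.\ $1 - J(x,y) \leq \alpha/\tau$, this probability is at most $p := (\alpha/\tau)(1 - 1/B)$. Consequently $\mathbb{E}[D] \leq Kp = K(\alpha/\tau)(1 - 1/B)$, which is exactly the leading (mean) term appearing in the target bound.

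Next I would apply a multiplicative Chernoff bound to the upper tail of $D$. Writing $\mu = \mathbb{E}[D]$ and using the standard form $\Pr[D \geq (1+\eta)\mu] \leq e^{-\eta^2 \mu/3}$, I would set the deviation parameter $\eta = \sqrt{3\ln(1/\delta)/\mu}$, so that $(1+\eta)\mu = \mu + \sqrt{3\mu\ln(1/\delta)}$ and the right-hand side becomes exactly $\delta$. This yields $D \leq \mu + \sqrt{3\mu\ln(1/\delta)}$ with probability at least $1 - \delta$. Since the map $z \mapsto z + \sqrt{3z\ln(1/\delta)}$ is increasing and $\mu \leq K(\alpha/\tau)(1-1/B)$, I may replace $\mu$ by this upper bound throughout to arrive at the stated expression $K(\alpha/\tau)(1-1/B) + \sqrt{3\ln(1/\delta)(1-1/B)K\alpha/\tau}$.

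The main point to watch is matching the constant exactly: the exponent $\eta^2\mu/3$ is precisely what produces the factor $3$ under the square root, and that form of the bound is valid for $0 \leq \eta \leq 1$, i.e.\ when $\mu \geq 3\ln(1/\delta)$. I would either state this mild regime assumption explicitly or, to be fully general, invoke a version valid for all deviations (e.g.\ Bernstein's inequality), whose bound dominates the square-root deviation term so that the inequality still holds. The only genuinely delicate step is the order in which one substitutes the upper bound on $\mu$ relative to choosing $\eta$; monotonicity of the deviation function makes this substitution routine, so I do not expect it to pose a real obstacle.
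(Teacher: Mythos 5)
Your proposal is correct and follows essentially the same route as the paper's own proof: identical indicator variables, the same bound $\Pr[D_i=1]=(1-J(x,y))(1-1/B)\leq(\alpha/\tau)(1-1/B)$, and the same multiplicative Chernoff bound with deviation parameter $\sqrt{3\ln(1/\delta)/\mu}$. Your closing remarks are in fact slightly more careful than the paper, which silently applies the Chernoff form outside its guaranteed regime and substitutes the upper bound on $\E[X]$ without noting the monotonicity argument you make explicit.
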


\begin{proof}
    For each $i \in [K]$, define the random variable $X_i = [h_i(x) \neq h_i(y)]$. 
    Let $X = \sum_{i = 1}^K X_i$ denote the number of differences between $x^\ast$ and $y^\ast$. 
    Since all $X_i$ are independent and $\Pr(X_i = 1) = (1-J(x,y)) \left(1 - \frac{1}{B}\right) \leq \alpha/\tau \left(1 - \frac{1}{B}\right)$, 
    we have $\E[X] \leq  K\alpha/\tau \left(1 - \frac{1}{B}\right)$. 
    Using the Chernoff bound $\Pr(X > (1 + \beta) E[X]) \leq \exp\left( -\beta^2/3 \E[X]\right)$ \cite[Theorem 1.1]{dp09} 
    with $\beta = \sqrt{3 \ln (1/\delta)/E[X]}$ proves the lemma.
\end{proof}
The next lemma shows that we can avoid loosing a factor $K$ in the privacy budget\footnote{Traditionally, a standard application of the composition theorem~\cite{dwork2014algorithmic} shows that the composition of $K$ $\varepsilon$-DP mechanisms satisfies $(K\varepsilon)$-DP.} when using generalized randomized response~\cite{wang2017locally} on vectors with few differences. 

\begin{lemma}
    \label{lem:grr:on:budget}
    Fix $\varepsilon > 0$. Let $x, y \in [B]^K$ be two arbitrary vectors that differ in at most $L$ positions.
    Let $\varepsilon' = \varepsilon / L$. 
    Let $\mathcal{A}$ be generalized randomized response mapping from $z 
    \in [B]^K$ to $z^\ast \in [B]^K$ 
    such that with probability $e^{\varepsilon'} / (e^{\varepsilon'} + B - 1)$ we have that $z_i^\ast = z_i$, and 
    otherwise $z_i^\ast$ is uniformly picked from $[B] - \{z_i\}$. Then $\mathcal{A}$ is 
    $\varepsilon$-differentially private.
\end{lemma}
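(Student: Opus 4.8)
The plan is to verify the defining inequality of $\varepsilon$-DP directly, exploiting that $\mathcal{A}$ acts independently coordinate-by-coordinate. Since the goal is to bound $\Pr[\mathcal{A}(x) = v] / \Pr[\mathcal{A}(y) = v]$ for an arbitrary output $v \in [B]^K$, and the per-coordinate randomizations are independent, both probabilities factor as products over the $K$ coordinates. So the key step is to understand a single coordinate and then multiply, hoping that the factors for coordinates on which $x$ and $y$ agree contribute a ratio of exactly $1$, leaving only the at most $L$ coordinates where they differ to be controlled.

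First I would analyze one coordinate. Writing $p = e^{\varepsilon'}/(e^{\varepsilon'} + B - 1)$ for the probability of reporting the true symbol and $q = 1/(e^{\varepsilon'} + B - 1)$ for the probability of reporting any one fixed other symbol, I would note $p/q = e^{\varepsilon'}$. For a fixed coordinate $i$ and fixed output value $v_i$, the single-coordinate probability $\Pr[z_i^\ast = v_i \mid z_i = x_i]$ equals $p$ if $v_i = x_i$ and $q$ otherwise, and similarly for $y_i$. I would then enumerate the cases to bound the ratio of these two single-coordinate probabilities: if $x_i = y_i$ the two probabilities are identical and the ratio is $1$; if $x_i \neq y_i$ then the ratio is $p/q = e^{\varepsilon'}$ (when $v_i = x_i$), $q/p = e^{-\varepsilon'}$ (when $v_i = y_i$), or $q/q = 1$ (when $v_i$ is neither). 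In every case the per-coordinate ratio is at most $e^{\varepsilon'}$.

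Next I would combine the coordinates. By independence,
\[
\frac{\Pr[\mathcal{A}(x) = v]}{\Pr[\mathcal{A}(y) = v]} = \prod_{i = 1}^{K} \frac{\Pr[z_i^\ast = v_i \mid z_i = x_i]}{\Pr[z_i^\ast = v_i \mid z_i = y_i]}.
\]
The coordinates with $x_i = y_i$ contribute factors of $1$ and drop out, while each of the at most $L$ coordinates with $x_i \neq y_i$ contributes a factor of at most $e^{\varepsilon'}$. Hence the whole product is at most $(e^{\varepsilon'})^L = e^{\varepsilon' L} = e^{\varepsilon}$ by the choice $\varepsilon' = \varepsilon / L$, which is exactly the $(\varepsilon, 0)$-LDP condition of Definition~\ref{def:ldp}.

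I do not expect a serious obstacle here; the only point requiring care is the single-coordinate case analysis, where one must confirm that the mismatched-coordinate ratio is genuinely maximized at $e^{\varepsilon'}$ and never exceeds it (in particular the case where $v_i$ equals neither $x_i$ nor $y_i$ must be checked so that a stray factor does not slip in). The independence of the coordinatewise noise and the fact that at most $L$ coordinates differ are what let us replace the naive composition bound of $K\varepsilon'$ by the sharper $L\varepsilon'$, which is the whole point of the lemma.
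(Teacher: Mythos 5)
Your proof is correct and follows essentially the same route as the paper's: both factor the output probability coordinatewise, cancel the coordinates where $x_i = y_i$, and bound each of the at most $L$ mismatched coordinates by the worst-case per-coordinate ratio $e^{\varepsilon'}$, yielding $e^{\varepsilon' L} = e^{\varepsilon}$. Your explicit three-case check of the mismatched coordinate (ratio $e^{\varepsilon'}$, $e^{-\varepsilon'}$, or $1$) is just a slightly more detailed rendering of the single bound the paper applies to all positions in the difference set.
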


\begin{proof}
    Fix an arbitrary $v \in [B]^K$. 
    We have to show that $\frac{\Pr[A(x) = v]}{\Pr[A(y) = v]} \leq e^\varepsilon$.
    Let the set $I_{x,v}$ collect all positions in which $x_i = v_i$, and let $N_{x,v}$ collect all positions in which
    $x_i \neq v_i$.
    We observe that
    $    \Pr[A(x) = v] = \prod_{i \in I_{x,v}} \frac{e^{\varepsilon'}}{e^{\varepsilon'} + B - 1} \cdot \prod_{i \in N_{x,v}} \frac{1}{{e^{\varepsilon'} + B - 1}}.
    $
    The expression for $\Pr[A(y) = v]$ follows analogously. 
    Let $D = \{i \mid x_i \neq y_i\}$ denote all positions where $x$ and $y$ differ. Because 
    all terms where $x$ and $y$ are identical cancel out, we may conclude that
    \begin{align*}
        \frac{\Pr[A(x) = v]}{\Pr[A(y) = v]} {=} \frac{\prod_{i \in I_{x,v} \cap D} \frac{e^{\varepsilon'}}{e^{\varepsilon'} + B - 1}  \prod_{i \in N_{x,v}\cap D} \frac{1}{{e^{\varepsilon'} + B - 1}}}{\prod_{i \in I_{y,v}\cap D} \frac{e^{\varepsilon'}}{e^{\varepsilon'} + B - 1} \prod_{i \in N_{y,v}\cap D} \frac{1}{{e^{\varepsilon'} + B - 1}}}
        {\leq}\prod_{i \in D}\frac{\frac{e^{\varepsilon'}}{e^{\varepsilon'} + B - 1}}{\frac{1}{e^{\varepsilon'} + B - 1}} \leq e^{\varepsilon' L} {=} e^\varepsilon. 
    \end{align*}
\end{proof}

\mysection{LDP Sketches via Generalized Randomized Response}
\label{sec:grr}

This section introduces an $(\varepsilon, \delta)$-locally differential private algorithm to produce the user vectors $\hat{x}_u$ using generalized randomized response.

The idea of the following algorithm is that each user receives the description of 
$K$ range-$B$ MinHash functions that map from $[m]$ to $[B]$ for $B \geq 2$.
Each user applies the range-$B$ MinHash functions and perturbs the hash value using 
a variant of generalized randomized response~\cite{wang2017locally}. We proceed to describe the \texttt{RRMinHash} approach. An example is given in Figure~\ref{fig:scheme} in Appendix~\ref{app:example}, top row.

\myparagraph{Preprocessing}
Each user accesses $K \geq 1$ range-$B$ MinHash functions $h_1,\ldots, h_K$ shared among all users.
Each user $u$ applies $h_1, \ldots, h_K$ to their vector $x_u$ to obtain $x^\ast_u \in [B]^K$. 
Now, each position of $x^\ast_u$ is perturbed using generalized randomized response (Lemma~\ref{lem:grr:on:budget}) with an 
individual privacy budget of $\varepsilon' = \varepsilon / L$ to generate the response $\hat{x}_u$, where $L$ is an upper bound on 
the number of differences between neighboring user vectors as in Lemma~\ref{lem:differences}. 
$\hat{x}_u$ is the public response of user $u$.

\begin{lemma}
    The randomized mechanism $x \mapsto \hat{x}$ is $(\varepsilon, \delta)$-LDP.
\end{lemma}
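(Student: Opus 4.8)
The plan is to view the full mechanism $\mathcal{M}\colon x \mapsto \hat{x}$ as the composition of two stages: the (randomized) range-$B$ MinHash embedding $x \mapsto x^\ast = (h_1(x),\ldots,h_K(x)) \in [B]^K$, followed by the generalized randomized response $\mathcal{A}$ of Lemma~\ref{lem:grr:on:budget} applied coordinatewise to $x^\ast$. The randomness of the shared hash functions $h_1,\ldots,h_K$ will be treated as part of the internal randomness of $\mathcal{M}$, and the additive $\delta$ of Definition~\ref{def:ldp} will be used precisely to absorb the rare choices of hash functions for which the two sketches land far apart.

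First I would fix a pair of neighboring inputs $x, y$, so that $J(x,y) \geq 1 - \alpha/\tau$, together with an arbitrary target output $v \in [B]^K$. Writing $\mathbf{h}$ for a realization of the hash functions and $x^\ast(\mathbf{h}), y^\ast(\mathbf{h})$ for the induced sketches, I would introduce the good event $G = \{\mathbf{h} : x^\ast(\mathbf{h}) \text{ and } y^\ast(\mathbf{h}) \text{ differ in at most } L \text{ positions}\}$, where $L$ is the bound supplied by Lemma~\ref{lem:differences}. That lemma gives exactly $\Pr[G] \geq 1 - \delta$.

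Next I would condition on $\mathbf{h}$ and split $\Pr[\mathcal{M}(x) = v]$ into the contributions coming from $G$ and from its complement $G^c$. On $G$ the two sketches differ in at most $L$ coordinates, so since each coordinate is perturbed with per-position budget $\varepsilon' = \varepsilon/L$, Lemma~\ref{lem:grr:on:budget} delivers the pointwise bound $\Pr[\mathcal{A}(x^\ast(\mathbf{h})) = v] \leq e^\varepsilon \Pr[\mathcal{A}(y^\ast(\mathbf{h})) = v]$. On $G^c$ I would simply bound the inner probability by $1$ and use $\Pr[G^c] \leq \delta$. Recombining, and then dropping the restriction to $G$ in the surviving $e^\varepsilon$-term—which only enlarges the sum back to all of $\Pr[\mathcal{M}(y) = v]$—yields $\Pr[\mathcal{M}(x) = v] \leq e^\varepsilon \Pr[\mathcal{M}(y) = v] + \delta$, which is the claimed $(\varepsilon,\delta)$-LDP guarantee.

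The step I expect to be the main obstacle is the clean separation of the two randomness sources. Lemma~\ref{lem:grr:on:budget} is a worst-case, purely multiplicative statement that only holds once the number of differing coordinates has been capped at $L$, whereas Lemma~\ref{lem:differences} controls that count only in distribution over $\mathbf{h}$. Making the averaging over $\mathbf{h}$ rigorous—so that the good event contributes the multiplicative $e^\varepsilon$ factor while the bad event contributes merely the additive $\delta$—is the crux of the argument, and it is exactly what forces the $(\varepsilon,\delta)$ relaxation rather than a pure $\varepsilon$ guarantee.
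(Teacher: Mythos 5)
Your proof is correct and takes essentially the same route as the paper's: invoke Lemma~\ref{lem:differences} to obtain a good event (over the shared hash functions) on which the sketches $x^\ast$ and $y^\ast$ differ in at most $L$ coordinates, with failure probability at most $\delta$, and then apply the pointwise $\varepsilon$-guarantee of Lemma~\ref{lem:grr:on:budget} on that event. If anything, you make explicit the averaging over $\mathbf{h}$ and the recombination into $\Pr[\mathcal{M}(x)=v] \leq e^{\varepsilon}\Pr[\mathcal{M}(y)=v] + \delta$, a step the paper's two-line proof leaves implicit.
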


\begin{proof}
    Fix $\varepsilon, \delta > 0$ and let $x,y \in \{0, 1\}^m$ such that they differ in at 
    most $\alpha$ positions. By Lemma~\ref{lem:differences}, with probability at least $1-\delta$,
    the vectors $x^\ast$ and $y^\ast$ differ in at most $L = \lceil K (\alpha/\tau) \left(1 - \frac{1}{B}\right) + \sqrt{3 \ln (1/\delta)\left(1 - \frac{1}{B}\right)  K\alpha/\tau }\rceil$ positions. 
    If $x^\ast$ and $y^\ast$ differ in at most $L$ positions, Lemma~\ref{lem:grr:on:budget}  
    guarantees that the mapping $x^\ast \mapsto \hat{x}$ is $\varepsilon$-differential private. 
\end{proof}

\myparagraph{Similarity Estimation} Given two responses $\hat{x} \in [B]^K$ and $\hat{y} \in [B]^K$, count collisions to obtain 
$p_\text{col} = \sum [\hat{x}_i = \hat{y}_i]/ K$. Given $p_\text{col}$, $B$, and $p^\ast = e^{\varepsilon'}/(e^{\varepsilon'} + B - 1)$, we estimate the Jaccard similarity 
of $x$ and $y$ as
\begin{equation}
    \hat{J}_{RR}(\hat{x}, \hat{y}) = 
        \frac{(B-1) (B \cdot p_\text{col} - 1)}{(B \cdot p^\ast - 1)^2}\label{eq:grr:estimate}
\end{equation}

\begin{lemma}
    \label{lem:unbiased:estimator}
    $\hat{J}_{RR}(\hat{x}, \hat{y})$ is an unbiased estimator of  $J(x,y)$.
\end{lemma}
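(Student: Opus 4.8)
The plan is to compute $\E[p_\text{col}]$ and then exploit the fact that the estimator in Eq.~\eqref{eq:grr:estimate} is an affine function of $p_\text{col}$, so that $\E[\hat{J}_{RR}(\hat{x}, \hat{y})] = \frac{(B-1)(B\,\E[p_\text{col}] - 1)}{(B p^\ast - 1)^2}$. It then suffices to show that $\E[p_\text{col}] = \frac{1}{B} + J(x,y)\frac{(B p^\ast - 1)^2}{B(B-1)}$, since substituting this value makes the numerator collapse to $J(x,y)(B p^\ast - 1)^2$, which cancels the denominator and leaves exactly $J(x,y)$.

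First I would reduce to a single coordinate. Because the $K$ coordinates are perturbed independently and identically, linearity of expectation gives $\E[p_\text{col}] = \Pr[\hat{x}_1 = \hat{y}_1]$. Writing $J = J(x,y)$, the range-$B$ MinHash property from Section~\ref{sec:minhash} tells us that the two pre-noise hash values collide with probability $p_h := \Pr[h_1(x) = h_1(y)] = J + (1-J)/B = \frac{1}{B} + J\frac{B-1}{B}$.

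Next I would condition on whether the two pre-noise values agree. Set $q = (1-p^\ast)/(B-1)$, the probability that generalized randomized response maps an input to any one specific wrong symbol, so that $p^\ast + (B-1)q = 1$. If the two inputs agree, the independent outputs collide with probability $C_{=} = (p^\ast)^2 + (B-1)q^2$; if they disagree at distinct symbols $a \neq b$, summing the product of the two output distributions over $v \in \{a,b\}$ and the remaining $B-2$ symbols gives $C_{\neq} = 2 p^\ast q + (B-2)q^2$. The key structural point is that both quantities depend only on whether the inputs agree, not on the specific symbols, so $\E[p_\text{col}] = C_{\neq} + p_h(C_{=} - C_{\neq})$, and a short computation gives $C_{=} - C_{\neq} = (p^\ast - q)^2 = \left(\frac{B p^\ast - 1}{B-1}\right)^2$.

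The main obstacle, and the only nonroutine step, is the algebraic identity $C_{\neq} + \frac{1}{B}(p^\ast - q)^2 = \frac{1}{B}$, which states that the collision floor (the $J=0$ term) is exactly $1/B$. I would establish it by multiplying out and collecting terms to recognize $B\left(C_{\neq} + \frac{1}{B}(p^\ast - q)^2\right) = (p^\ast + (B-1)q)^2$, which equals $1$ by the normalization above. Substituting $p_h = \frac{1}{B} + J\frac{B-1}{B}$ together with this identity into $\E[p_\text{col}] = C_{\neq} + p_h(p^\ast - q)^2$ yields the target expression for $\E[p_\text{col}]$, which by the reduction in the first paragraph completes the proof.
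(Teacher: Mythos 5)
Your proof is correct and takes essentially the same approach as the paper: both reduce to a single coordinate by linearity, condition on whether the pre-noise range-$B$ MinHash values collide (probability $\frac{1}{B} + J\frac{B-1}{B}$), compute the two conditional collision probabilities under generalized randomized response, and recover $J(x,y)$ from the resulting affine relation. One small point in your favor: your formula $C_{\neq} = 2p^\ast q + (B-2)q^2$, obtained by summing the product of the two output distributions over target symbols, is valid for all $B \geq 2$, whereas the paper's equivalent expression carries a factor $\frac{1}{B-2}$ that is formally undefined at $B=2$ (the very case emphasized in the experiments), and your identity $B\,C_{\neq} + (p^\ast - q)^2 = \bigl(p^\ast + (B-1)q\bigr)^2 = 1$ organizes the paper's ``collecting terms'' simplification more transparently.
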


\begin{proof}
    We  proceed in two parts. 
    First, we  calculate the probability of the event ``$\hat{x}_i$ = $\hat{y}_i$''. 
    Next, we  connect this probability to the estimation given above.

    To compute the collision probability, we split up the probability space in two stages. 
    In the first stage, we  condition on the events ``$x^\ast_i = y^\ast_i$'' and ``$x^\ast_i \neq y^\ast_i$'', 
    i.e. on whether the range-$B$ MinHash values collide or not.
    In the second stage, we  calculate the probability that the perturbed responses collide.
    As discussed in Section~\ref{sec:minhash}, a random range-$B$ MinHash function has the 
    property that $\Pr[x^\ast_i = y^\ast_i] = \frac{(B-1) J(x,y) + 1}{B}$.

    Given that $x^\ast_i = y^\ast_i$, we observe $\hat{x}_i = \hat{y}_i$ if both keep their answer,
    or if both change their answer to the same of the other $B - 1$ possible responses. Since both
    pick a choice uniformly at random, this means that 
        $\Pr[\hat{x}_i = \hat{y}_i \mid x^\ast_i = y^\ast_i] = (p^\ast)^2 + \frac{(1-p^\ast)^2}{B-1}$.
    Consider that the event $x^\ast_i \neq y^\ast_i$ happened. In this case, we observe a collision 
    of the perturbed values in the following cases: (i) one response is truthful, the other is changed and 
    picks the truthful response as answer, and (ii) both responses are obtained by changing the answer,
    and they both choose the same answer at random. Computing these probabilities, we conclude that
        $\Pr[\hat{x}_i = \hat{y}_i \mid x^\ast_i \neq y^\ast_i] = 2 p^\ast \left(1-p^\ast\right) \frac{1}{B-1} + (1-p^\ast)^2 \left(1 - \frac{1}{B-1}\right)^2 \frac{1}{B-2}.$ 
    The last term is obtained by first conditioning that neither choice picks the other's truthful answer, and then using the random choice of the remaining $B - 2$ buckets. 

    Putting everything together, we obtain
    \begin{align*}
        &\Pr[\hat{x}_i = \hat{y}_i] = \frac{(B-1)J(x,y) + 1}{B}\left((p^\ast)^2 + \frac{(1-p^\ast)^2}{B-1}\right) \\&+ \left(1{-}\frac{(B - 1)J(x,y) + 1}{B}\right)\left(2 p^\ast \left(1{-}p^\ast\right) \frac{1}{B{-}1} {+} (1{-}p^\ast)^2 \left(1 {-} \frac{1}{B{-}1}\right)^2 \frac{1}{B{-}2}\right).
    \end{align*}
    Simplifying this formula by collecting terms yields
    \begin{equation}
        \Pr[\hat{x}_i = \hat{y}_i] = \frac{J(x,y) + B J(x,y) p^\ast(B p^\ast - 2) + B - 1}{B (B - 1)}.
        \label{eq:coll:prob:rr}
    \end{equation}
    Solving \eqref{eq:coll:prob:rr} for $J(x,y)$ and 
    using linearity of expectation to connect $p_\text{col}$ to $\Pr[\hat{x}_i = \hat{y}_i]$ results in \eqref{eq:grr:estimate}.
\end{proof}


\myparagraph{Utility Analysis} Next we will discuss probabilistic bounds on the absolute error that the similarity estimation algorithm achieves on the private vectors.
This means that we want upper bound the value $\vert \hat{J}_{RR}(\hat{x}, \hat{y}) - J(x,y)\vert$. 
In the following, we will consider the absolute error in the case $\hat{J}_{RR}(\hat{x}, \hat{y}) > J(x,y)$. The case $\hat{J}_{RR}(\hat{x}, \hat{y}) < J(x,y)$ follows by symmetry.

\begin{lemma}
    \label{lem:load:bound:1}
With probability at least $1 - \delta$,  
\begin{equation}
\vert \hat{J}_{RR}(\hat{x}, \hat{y}) - J(x,y) \vert \leq \sqrt{\frac{3 \ln(1/\delta) B^3 (1 + p^\ast(Bp^\ast -2))}{K (Bp^\ast - 1)^4}}.
\end{equation}
\end{lemma}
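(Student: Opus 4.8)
\noindent The plan is to exploit that $\hat{J}_{RR}(\hat{x},\hat{y})$ is an affine function of the empirical collision frequency $p_\text{col}$. Writing $q := \E[p_\text{col}] = \Pr[\hat{x}_i = \hat{y}_i]$, the value computed in \eqref{eq:coll:prob:rr}, and using unbiasedness (Lemma~\ref{lem:unbiased:estimator}) to substitute $\E[\hat{J}_{RR}(\hat{x},\hat{y})] = J(x,y)$, I would first subtract this mean from \eqref{eq:grr:estimate} to obtain the exact identity
\[
\hat{J}_{RR}(\hat{x},\hat{y}) - J(x,y) = \frac{(B-1)B}{(Bp^\ast - 1)^2}\left(p_\text{col} - q\right).
\]
Thus the whole error analysis collapses to controlling the one-sided deviation $p_\text{col} - q$ of an empirical frequency from its mean, and the constant prefactor supplies the $(Bp^\ast-1)^4$ denominator after squaring.

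Next I would note that $X := K\,p_\text{col} = \sum_{i=1}^K [\hat{x}_i = \hat{y}_i]$ is a sum of $K$ independent indicators with $\E[X] = Kq$, so the same one-sided Chernoff bound invoked in Lemma~\ref{lem:differences}, instantiated with $\beta = \sqrt{3\ln(1/\delta)/(Kq)}$, yields $p_\text{col} - q \le \sqrt{3\ln(1/\delta)\,q/K}$ with probability at least $1-\delta$. Inserting this into the identity above bounds the error by $\frac{(B-1)B}{(Bp^\ast-1)^2}\sqrt{3\ln(1/\delta)\,q/K}$, so it only remains to replace the data-dependent $q$ by a quantity free of the unknown similarity.

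For this last step I would read off from \eqref{eq:coll:prob:rr} that $q$ is affine in $J(x,y)$ with slope $(Bp^\ast-1)^2/\bigl(B(B-1)\bigr) \ge 0$; hence $q$ is maximized at $J(x,y)=1$, where it equals $\bigl(1 + p^\ast(Bp^\ast-2)\bigr)/(B-1)$ (this is exactly the conditional collision probability $(p^\ast)^2 + (1-p^\ast)^2/(B-1)$ appearing in the proof of Lemma~\ref{lem:unbiased:estimator}). Substituting $q \le \bigl(1 + p^\ast(Bp^\ast-2)\bigr)/(B-1)$, squaring, and bounding the residual factor $B-1 \le B$ turns the estimate into the claimed right-hand side. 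The case $\hat{J}_{RR}(\hat{x},\hat{y}) < J(x,y)$ follows by symmetry, as already noted before the statement.

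The step I expect to be most delicate is the last one. The concentration argument is routine given the reusable Chernoff bound, but obtaining a clean, similarity-free bound hinges on the algebraic observation that the slope of $q$ in $J$ equals $(Bp^\ast-1)^2/\bigl(B(B-1)\bigr)$ and is therefore automatically nonnegative (so that $J=1$ is genuinely the worst case, with no side condition on $p^\ast$), together with the factorization $1 + p^\ast(Bp^\ast - 2)$ that makes precisely the numerator of the target bound emerge. Keeping track of which powers of $B$ and $B-1$ survive the squaring is the main bookkeeping risk.
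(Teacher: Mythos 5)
Your proposal is correct and takes essentially the same approach as the paper: both center the estimator via unbiasedness (Lemma~\ref{lem:unbiased:estimator}), apply the one-sided Chernoff bound from Lemma~\ref{lem:differences} to the collision count $X = K p_\text{col}$, and then absorb the data-dependent $q$ into the stated bound. The only difference is presentational --- you spell out explicitly the step the paper leaves implicit, namely that the slope of $q$ in $J$ is $(Bp^\ast-1)^2/\bigl(B(B-1)\bigr) \geq 0$, so $q \leq \bigl(1+p^\ast(Bp^\ast-2)\bigr)/(B-1)$ at the worst case $J=1$, after which $B-1 \leq B$ yields the claimed right-hand side.
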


\begin{proof}
    Fix $x$ and $y$. 
    We let $X_i$ be the indicator variable for the event ``$\hat{x}_i = \hat{y}_i$''.
    Define $X = X_1 + \cdots + X_K$. 
    By~\eqref{eq:coll:prob:rr} we know that $X_i$ is Bernoulli-distributed with
        $q:= \Pr[X_i = 1] = \frac{J(x,y) + B J(x,y) p^\ast(B p^\ast - 2) + B - 1}{B (B - 1)}.$
    Again using a Chernoff bound, we see that with probability at least $1 - \delta$,
    $X \leq E[X] + \sqrt{E[X] 3 \ln (1/\delta)}$. Assume from here on that this 
    inequality holds.  From \eqref{eq:grr:estimate}, we start by observing that
    \begin{align*}
        &\hat{J}_{RR}(\hat{x}, \hat{y}) {=}  \frac{(B-1) (B \cdot X/K {-} 1)}{(B \cdot p^\ast - 1)^2}
        \leq \frac{(B-1) (B (\E[X] + \sqrt{\E[X] 3\ln(1/\delta)})/K - 1)}{(B \cdot p^\ast - 1)^2}\\
        &\stackrel{\text{Lem~\ref{lem:unbiased:estimator}}}{=} J(x,y) {+} \frac{(B-1)B \sqrt{\E[X]3 \ln (1/\delta)}}{K(Bp^\ast - 1)^2}
        {<} J(x,y) {+} \sqrt{\frac{3 \ln(1/\delta) B^3 (1 {+} p^\ast(Bp^\ast {-}2))}{K (Bp^\ast - 1)^4}}.
    \end{align*}
\end{proof}

\begin{theorem} Fix $\varepsilon, \delta_{\text{DP}}, \delta_{\text{fail}} > 0$.
    There exists $B$ and $K$ such that with probability at least $1 - \delta_{\text{fail}}$,$
        \vert \hat{J}_{RR}(\hat{x}, \hat{y}) - J(x,y) \vert = O(\sqrt{\alpha/(\tau \cdot \varepsilon}))$.
    The constant hidden in the big-Oh notation depends on $\delta_{\text{DP}}$ and $\delta_{\text{fail}}$.
    \label{thm:utility:rr:minhash}
\end{theorem}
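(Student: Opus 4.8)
The plan is to start from the high-probability bound of Lemma~\ref{lem:load:bound:1} (instantiated with $\delta=\delta_{\text{fail}}$) and to exhibit one concrete choice of $B$ and $K$ for which its right-hand side collapses to $O(\sqrt{\alpha/(\tau\varepsilon)})$. Writing $\rho=\alpha/\tau$, recall that $p^\ast=e^{\varepsilon'}/(e^{\varepsilon'}+B-1)$ with $\varepsilon'=\varepsilon/L$, and that the number of differing positions $L$ fixed in the \texttt{RRMinHash} preprocessing is the ceiling of the bound in Lemma~\ref{lem:differences}, whose leading term is $K\rho(1-1/B)$. The whole difficulty is that $K$ enters the bound three times at once: through the explicit $1/K$, and twice more implicitly through $\varepsilon'=\varepsilon/L$, which shrinks as $K$ grows.

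First I would eliminate $p^\ast$ by a direct computation. A short calculation gives the two identities $Bp^\ast-1=\frac{(B-1)(e^{\varepsilon'}-1)}{e^{\varepsilon'}+B-1}$ and $1+p^\ast(Bp^\ast-2)=\frac{(B-1)(e^{2\varepsilon'}+B-1)}{(e^{\varepsilon'}+B-1)^2}$. Substituting these into Lemma~\ref{lem:load:bound:1} and cancelling the powers of $(e^{\varepsilon'}+B-1)$ turns the bound into
\begin{equation*}
\vert \hat{J}_{RR}(\hat{x},\hat{y}) - J(x,y)\vert \le \sqrt{\frac{3\ln(1/\delta_{\text{fail}})}{K}\cdot\frac{B^{3}(e^{2\varepsilon'}+B-1)(e^{\varepsilon'}+B-1)^{2}}{(B-1)^{3}(e^{\varepsilon'}-1)^{4}}}.
\end{equation*}
Taking $B$ to be a fixed constant (the choice $B=2$ is cleanest, leaving $24\ln(1/\delta_{\text{fail}})(e^{2\varepsilon'}+1)(e^{\varepsilon'}+1)^2/\big(K(e^{\varepsilon'}-1)^4\big)$ under the root) isolates all the remaining parameter dependence in $K$ and $\varepsilon'$.

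The key step is the choice of $K$. I would pick $K$ so that $L=\Theta(\varepsilon)$, equivalently $\varepsilon'=\varepsilon/L=\Theta(1)$; since the leading term of $L$ is $K\rho(1-1/B)$, this amounts to $K=\Theta(\varepsilon/\rho)=\Theta(\varepsilon\tau/\alpha)$. With this choice the explicit factor contributes $1/K=\Theta(\rho/\varepsilon)$, while the fraction in $\varepsilon'$ becomes a constant depending only on $\varepsilon'$, $\delta_{\text{fail}}$ (and, through the lower-order $\sqrt{\cdot}$ term of $L$, on $\delta_{\text{DP}}$). Hence the quantity under the root is $\Theta(\rho/\varepsilon)$ and the bound is $O(\sqrt{\alpha/(\tau\varepsilon)})$, as claimed.

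The main obstacle is making this choice of $K$ rigorous against the circular dependence just described, and in particular keeping $\varepsilon'$ inside a fixed band bounded away from $0$ and $\infty$. This matters because (for $B=2$) the function $\varepsilon'\mapsto \varepsilon'(e^{2\varepsilon'}+1)(e^{\varepsilon'}+1)^2/(e^{\varepsilon'}-1)^4$ blows up like $1/(\varepsilon')^{3}$ as $\varepsilon'\to 0$ and grows linearly as $\varepsilon'\to\infty$, so neither a very small nor a very large per-coordinate budget is admissible; only $\varepsilon'=\Theta(1)$ yields the target rate. Concretely one must (i) show that the quadratic-in-$\sqrt{K}$ equation $K\rho(1-1/B)+\sqrt{3\ln(1/\delta_{\text{DP}})(1-1/B)K\rho}=\varepsilon/\varepsilon'$ admits a positive solution with $K\ge 1$, which needs $\rho$ small relative to $\varepsilon$ (the high-similarity regime of interest, and the one where $K\to\infty$); and (ii) absorb the ceiling and the integer rounding of $K$ and $L$ into a constant-factor perturbation of $\varepsilon'$, so that the additive $\sqrt{\cdot}$ correction in $L$ merely rescales $\varepsilon'$ by a bounded factor rather than driving it to $0$.
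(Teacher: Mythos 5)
Your proposal is correct and takes essentially the same route as the paper's proof: instantiate Lemma~\ref{lem:load:bound:1}, fix a constant $B$, and choose $K=\Theta(\varepsilon\tau/\alpha)$ so that the per-coordinate budget $\varepsilon'=\varepsilon/L$ stays $\Theta(1)$ — the paper reaches the same endpoint via the inequalities $p^\ast\le 1$ and $1/(x-1)\le 2/x$ under the assumption $Bp^\ast\ge 2$ (which forces $B=3$), whereas your exact identities for $Bp^\ast-1$ and $1+p^\ast(Bp^\ast-2)$ also admit $B=2$. Your explicit handling of the $K$--$L$--$\varepsilon'$ circularity and of integer rounding is somewhat more careful than the paper's, which simply asserts the choice $K=\Theta(\tau\varepsilon/\alpha)$ subject to the side condition $\varepsilon\ge L\ln 4$.
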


\begin{proof}
Lemma~\ref{lem:load:bound:1} tells us that for every choice of $B$ and $K$, 
with probability at least $1 - \delta_{\text{fail}}$ it holds that
\begin{equation}
\label{eq:load:1}
    \vert \hat{J}_{RR}(\hat{x}, \hat{y}) - J(x,y) \vert \leq \sqrt{\frac{3 \ln(1/\delta) B^3 (1 + p^\ast(Bp^\ast -2))}{K (Bp^\ast - 1)^4}}.
\end{equation}

Since $p^\ast \leq 1$, we continue to bound the right-hand side of \eqref{eq:load:1} by 
$    \sqrt{\frac{3 \ln (1/\delta_{\text{fail}})}{K}} \cdot B^2/(Bp^\ast -1)^2.$
Assume that $Bp^\ast \geq 2$, which means that $\varepsilon' \geq \ln(2(B-1)/(B-2))$ and 
$\varepsilon \geq L \cdot \varepsilon'$.
Since $1/(x-1) \leq 2/x$ for $x \geq 2$, we continue to bound the absolute error from above by
    $\sqrt{\frac{3 \ln (1/\delta_{\text{fail}})}{K}} \cdot B^2(2/(Bp^\ast))^2 \leq \sqrt{\frac{48 \ln (1/\delta_{\text{fail}})}{K}} (1 + (B - 1)/e^{\varepsilon'})^2.$
Now, we may set $B = 3$ since it makes the numerator as small as possible. 
($B = 2$ is no valid choice because of the assumption $B p^\ast \geq 2$.)
Choosing $K = \Theta(\tau \varepsilon/\alpha)$,
the absolute error is bounded by $\Theta(\sqrt{\alpha/(\tau \varepsilon)})$ for all $\varepsilon \geq L \ln 4$.
\end{proof}

\mysection{LDP Sketches via the Laplace Mechanism}
\label{sec:noise:addition}

This section introduces an $(\varepsilon, \delta)$-LDP protocol for generating private user vectors $\hat{x}_u$ using the Laplace Mechanism.
As before, for fixed integers $B$ and $K$, we use $K$ range-$B$ MinHash functions such that each user produces a sketch in $[B]^K$.

Let $x$ and $y$ be neighboring vectors and let $x^\ast$ and $y^\ast$ be the two sketches in $[B]^K$. As before, with probability at least $1 - \delta$, we can assume that $x^\ast$ and $y^\ast$ differ in at most $L = K (\alpha/\tau) \left(1 - \frac{1}{B}\right) + \sqrt{3 \ln (1/\delta)\left(1 - \frac{1}{B}\right)  K\alpha/\tau}$ positions. 

Before we can use Theorem~\ref{thm:laplace}, we have to compute the sensitivity $\Delta$ of the local sketches under the assumption that neighboring vectors differ in at most $L$ positions. Since each coordinate in which the two vectors differ contributes at most $B - 1$ to the $\ell_1$ norm, the sensitivity is at most $\Delta := L (B - 1)$. 
According to~Theorem~\ref{thm:laplace}, adding Laplace noise with scale $\Delta/\varepsilon$ to $x^\ast$ to produce $\hat{x}$ guarantees $(\varepsilon, 0)$-differential privacy as long as the number of differences is at most $L$. With probability at most $\delta_\text{DP}$, there are more than $L$ differences.  
We are now ready to describe the \texttt{NoisyMinHash} approach, see the example in Figure~\ref{fig:scheme}, bottom row.

\myparagraph{Preprocessing} Let $K, B, \alpha,$ and $\tau$ be integers, and let $\varepsilon > 0$ and $\delta > 0$ be the privacy budget. Choose $K$ range-$B$ MinHash functions $h_1, \ldots, h_K$ and distribute them to the users. 
Each user with vector $x$ returns $\hat{x} = (h_1(x) + N_{x,1}, \ldots, h_K(x) + N_{x, K}) \in \mathbb{R}^K$, 
where each $N_{x,i} \sim \text{Lap}(\Delta/\varepsilon)$ with 
$\Delta =  (B-1) \left(K (\alpha/\tau) \left(1 - \frac{1}{B}\right) + \sqrt{3 \ln (1/\delta)\left(1 - \frac{1}{B}\right)  K\alpha/\tau}\right)$.

\myparagraph{Similarity Estimation} Given $\hat{x}$ and $\hat{y}$ from $\mathbb{R}^K$, return 
\begin{equation}
    \label{eq:estimate}
    \hat{J}_{Lap}(\hat{x}, \hat{y}) = \frac{(B^2 - 1) K - 6\sum_{i = 1}^K (\hat{x}_i - \hat{y}_i)^2 + 24 K (\Delta/\varepsilon)^2}{(B-1)(B+1) K}.
\end{equation}
Notably, the estimation algorithm just computes the squared Euclidean distance and adjusts for the noise added. 

\begin{lemma}
    \label{lem:laplace:unbiased}
    $\hat{J}_{Lap}(\hat{x}, \hat{y})$ is an unbiased estimator for $J(x, y)$. 
\end{lemma}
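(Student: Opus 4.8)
The plan is to compute $\E[\hat{J}_{Lap}(\hat{x},\hat{y})]$ directly and check that it equals $J(x,y)$. Since the estimator in \eqref{eq:estimate} is an affine function of $S := \sum_{i=1}^K (\hat{x}_i - \hat{y}_i)^2$, linearity of expectation reduces the task to computing $\E[S]$, and because the $K$ coordinates are identically distributed it suffices to evaluate $\E[(\hat{x}_i - \hat{y}_i)^2]$ for a single coordinate $i$.

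First I would split each coordinate as $\hat{x}_i - \hat{y}_i = D_i + Z_i$, where $D_i := h_i(x) - h_i(y)$ is the difference of the two range-$B$ MinHash values and $Z_i := N_{x,i} - N_{y,i}$ is the difference of the two Laplace perturbations. Expanding the square gives $\E[(\hat{x}_i - \hat{y}_i)^2] = \E[D_i^2] + 2\,\E[D_i Z_i] + \E[Z_i^2]$. The noise is drawn independently of the hash functions and has mean zero, so the cross term factors as $\E[D_i]\,\E[Z_i] = 0$. For the pure noise term, $Z_i$ is a difference of two independent $\mathrm{Lap}(\Delta/\varepsilon)$ variables, each with variance $2(\Delta/\varepsilon)^2$, so $\E[Z_i^2] = \mathrm{Var}(Z_i) = 4(\Delta/\varepsilon)^2$.

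The heart of the argument is $\E[D_i^2]$, which I would obtain by conditioning on whether the two base MinHash values collide. With probability $J(x,y)$ the base values agree and $D_i = 0$; with probability $1 - J(x,y)$ they differ, and the (strongly) universal range hash then maps them to two independent values $U, V$ uniform on $[B] = \{1, \ldots, B\}$. Hence $\E[D_i^2 \mid \text{differ}] = \E[(U-V)^2] = 2\,\mathrm{Var}(U) = (B^2-1)/6$, using that a uniform variable on $\{1,\ldots,B\}$ has variance $(B^2-1)/12$, so that $\E[D_i^2] = (1 - J(x,y))(B^2-1)/6$. Combining the three terms yields $\E[S] = K(1-J(x,y))(B^2-1)/6 + 4K(\Delta/\varepsilon)^2$. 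Substituting this into \eqref{eq:estimate}, the $-6\cdot 4K(\Delta/\varepsilon)^2$ contribution cancels the $+24K(\Delta/\varepsilon)^2$ term, the numerator collapses to $(B^2-1)K - (1-J(x,y))(B^2-1)K = J(x,y)(B^2-1)K$, and dividing by the denominator $(B-1)(B+1)K = (B^2-1)K$ leaves exactly $J(x,y)$.

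The step I expect to be the main obstacle is the evaluation of $\E[D_i^2]$. Unlike the collision-probability calculation used earlier, which only needs the weak guarantee $\Pr[h(x)=h(y)] = 1/B$ on distinct base values, here one must argue that conditioned on the base values differing the two range labels are \emph{jointly} uniform and independent; this requires the strongly (pairwise-independent) universal property of $h_\text{uni}$ rather than mere weak universality. One must also be careful that it is the numeric labels $\{1,\ldots,B\}$, and not arbitrary symbols, that enter the squared distance $(\hat{x}_i-\hat{y}_i)^2$; it is precisely this that produces the factor $(B^2-1)/12$, to which the constants $6$ and $24$ appearing in \eqref{eq:estimate} are tuned so that the noise contribution cancels and the MinHash contribution rescales to $J(x,y)$.
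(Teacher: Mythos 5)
Your proof is correct, and it follows the same overall route as the paper's: decompose $\hat{x}_i - \hat{y}_i$ into the hash difference $D_i$ and the noise difference $Z_i$, eliminate the cross term by independence and zero mean of the noise, compute $\E[Z_i^2] = 4(\Delta/\varepsilon)^2$, and reduce everything to the second moment of the hash difference. Where you diverge is in the evaluation of $\E[D_i^2]$ --- and there your version is the correct one, while the paper's displayed computation contains a normalization error. The paper evaluates $\sum_{j=1}^{B-1} j^2 \Pr[\vert x^\ast - y^\ast\vert = j]$ by substituting $\Pr[\vert x^\ast - y^\ast \vert = j \mid x^\ast \neq y^\ast] = 2(B-j)/B$; this is not a probability distribution (it sums to $B-1$ over $j$), the correctly normalized value being $2(B-j)/(B(B-1))$. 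Consequently the paper arrives at $\E[(x^\ast - y^\ast)^2] = (B-1)^2(B+1)(1-J(x,y))/6$, which carries a spurious factor of $(B-1)$; plugged into \eqref{eq:estimate}, that value would yield $\E[\hat{J}_{Lap}] = 1 - (B-1)(1-J(x,y))$, i.e., unbiasedness only in the special case $B = 2$, contradicting the lemma as stated for general $B$. Your computation via $\E[(U-V)^2] = 2\,\mathrm{Var}(U) = (B^2-1)/6$ for independent uniforms on $[B]$ gives $\E[D_i^2] = (1-J(x,y))(B^2-1)/6$, which is exactly the value for which the constants $6$ and $24$ in \eqref{eq:estimate} make the noise term cancel and the estimator unbiased for all $B \geq 2$.

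Your caveat about the hash family is also well taken. The collision-probability arguments used elsewhere in the paper only need $\Pr[h_\text{uni}(a) = h_\text{uni}(b)] = 1/B$ for $a \neq b$, but the second moment of $h_\text{uni}(a) - h_\text{uni}(b)$ depends on the joint distribution of the two range values, so one needs a strongly universal (pairwise independent with uniform marginals) family rather than mere weak universality as in the cited Carter--Wegman notion. The paper's proof implicitly makes the same assumption --- its step (1) treats $(x^\ast, y^\ast)$, conditioned on differing, as uniform over distinct ordered pairs --- but never states it; you identified the requirement explicitly, which is a genuine improvement in rigor.
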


\begin{proof}
    Given $x$ and $y$ from $\{0, 1\}^m$, apply \texttt{NoisyMinHash} to compute $\hat{x}, \hat{y} \in \mathbb{R}^K$. 
    Using linearity of expectation, we proceed as follows:
    \begin{align*}
        &\E\left[\sum_{i = 1}^K (\hat{x}_i - \hat{y}_i)^2\right] = \sum_{i = 1}^K E[(\hat{x}_i - \hat{y}_i)^2] = K  \E[((h_1(x) - h_1(y)) + (N_{x, 1} - N_{y,1}))^2]\\
                &= K  \E[(h_1(x) {-} h_1(y))^ 2 {+} 2 (h_1(x) {-} h_1(y)) (N_{x, i} {-} N_{y, i}) {+} (N_{x, i} - N_{y, i})^2]\\
                &\stackrel{(1)}{=} K  (\E[(h_1(x) {-} h_1(y))^ 2] + \E[(N_{x, i} - N_{y, i})^2])\\
                &\stackrel{(2)}{=} K  \E[(h_1(x) {-} h_1(y))^ 2] + 2K \text{Var}[N_{x,i}]
                =K  \E[(h_1(x) {-} h_1(y))^ 2] + 4K (\Delta/\varepsilon)^2.
    \end{align*}
    In our calculations, both (1) and (2) used that $N_{x,i}$ and $N_{y, i}$ are independently chosen,  
    $\E[N_{x,i}] = \E[N_{y,i}] = 0$, and Var$[N_{y,i}] = 2(\Delta/\varepsilon)^2$. 

    Let $x^\ast = h_1(x)$ and $y^\ast = h_1(y)$. We continue by calculating $\E[(x^\ast - y^\ast)^ 2]$ as
    \begin{align*}
        &\sum_{j = 1}^{B - 1} j^2 \Pr[\vert x^\ast - y^\ast \vert = j]
        = \sum_{j = 1}^{B - 1} j^2 \frac{(B{-}1) ( 1 {-} J(x,y))}{B} \Pr[\vert x^\ast {-} y^\ast \vert {=} j \mid x^\ast \neq y^\ast]\\
        &\stackrel{(1)}{=} \sum_{j = 1}^{B - 1} j^2 \frac{(B-1) ( 1 - J(x,y))}{B} 2(B - j)/B
        = \frac{2(B{-}1) (1 {-} J(x,y))}{B^2}\sum_{j = 1}^{B - 1} j^2 (B - j)\\
        &=  \frac{2(B-1) (1 - J(x,y))}{B^2} \cdot  \frac{B^2}{12} (B+1) (B - 1)
        = \frac{(B-1)^2 (B+1)(1-J(x,y))}{6},
    \end{align*}
    where (1) is obtained by noticing that for a fixed $x^\ast$ (with $B$ choices), there are $B-2j$ choices with two $y^\ast$ such that $\vert x^\ast - y^\ast\vert = j$, and there are $2j$ choices with only one choice $y^\ast$.  
    Putting everything together, we summarize
    \begin{equation*}
        \E\left[\sum_{i = 1}^K (\hat{x}_i - \hat{y}_i)^2\right] = \frac{K(B-1)^2 (B+1)(1-J(x,y))}{6} +  4K (\Delta/\varepsilon)^2.
    \end{equation*}
    The result is obtained by rearranging terms.
\end{proof}

\myparagraph{Utility Analysis} 

\begin{theorem} Fix $\varepsilon, \delta_{\text{DP}}, \delta_{\text{fail}} > 0$.
    There exists $B$ and $k$ such that with probability at least $1 - \delta_{\text{fail}}$, $
        \vert \hat{J}_{Lap}(\hat{x}, \hat{y}) - J(x,y) \vert = \tilde{O}\left((\alpha/\tau)^{4/5} \cdot \varepsilon^{-2/5}\right)$.
    The constant hidden in the big-Oh notation depends on $\delta_{\text{DP}}$ and $\delta_{\text{fail}}$, and the tilde notation suppresses
    polylogarithmic factors.
    \label{thm:utility:noisy:minhash}
\end{theorem}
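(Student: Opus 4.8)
The plan is to use the unbiasedness from Lemma~\ref{lem:laplace:unbiased} to reduce the utility claim to a concentration statement for the quantity $S:=\sum_{i=1}^{K}(\hat{x}_i-\hat{y}_i)^2$ driving the estimator~\eqref{eq:estimate}. Because \eqref{eq:estimate} is affine in $S$ with slope $-6/((B^2-1)K)$ and $\E[\hat{J}_{Lap}]=J(x,y)$, we obtain the exact identity $|\hat{J}_{Lap}(\hat{x},\hat{y})-J(x,y)|=\frac{6}{(B^2-1)K}\,|S-\E[S]|$. It therefore suffices to bound $|S-\E[S]|$ with probability $1-\delta_{\text{fail}}$ and then optimize over $B$ and $K$. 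Throughout I condition on the probability-$(1-\delta_{\text{DP}})$ event of Lemma~\ref{lem:differences} that the two sketches differ in at most $L$ coordinates, which is what fixes the noise scale $b:=\Delta/\varepsilon$ through $\Delta=(B-1)L$; I take $K$ large enough that the linear term of $L$ in Lemma~\ref{lem:differences} dominates, so that $L=\Theta(K\alpha/\tau)$.

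First I would split $\hat{x}_i-\hat{y}_i=D_i+Z_i$ into the range-$B$ MinHash difference $D_i:=h_i(x)-h_i(y)$, bounded by $B-1$ in absolute value, and the noise difference $Z_i:=N_{x,i}-N_{y,i}$, a difference of two independent $\mathrm{Lap}(b)$ variables. Expanding the square, $S-\E[S]$ becomes a sum over the (coordinatewise independent) pieces $\sum_i(D_i^2-\E[D_i^2])$, $2\sum_i D_iZ_i$, and $\sum_i(Z_i^2-\E[Z_i^2])$. Since $\E[D_i^2]=\Theta(B^2(1-J))=O(B^2\alpha/\tau)$ and its summands are bounded, a Chernoff/Bernstein bound in the style of Lemma~\ref{lem:differences} controls the first piece by $\tilde{O}(B^2\sqrt{K\alpha/\tau})$; divided by $(B^2-1)K$ this is the familiar MinHash sampling error $\tilde{O}(\sqrt{(\alpha/\tau)/K})$. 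The cross term is of lower order in the regime of small $\alpha/\tau$ and I would dispatch it by the same tail estimates.

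The crux is the noise-energy term: $\sum_i Z_i^2$ is a heavy-tailed sum of squared Laplace differences whose mean $4Kb^2$ is already removed by the $24K(\Delta/\varepsilon)^2$ correction in~\eqref{eq:estimate}, but whose fluctuation must still be bounded. The clean and sufficient route is a range bound: a union bound over the $2K$ Laplace draws gives $\max_i|Z_i|\le 2b\ln(2K/\delta_{\text{fail}})$ with probability $1-\delta_{\text{fail}}$, whence $0\le\sum_i Z_i^2\le 4Kb^2\ln^2(2K/\delta_{\text{fail}})$ and therefore $|\sum_i(Z_i^2-\E[Z_i^2])|=\tilde{O}(Kb^2)$. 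Dividing by $(B^2-1)K$ and inserting $b=\Delta/\varepsilon$ with $\Delta=\Theta(K\alpha/\tau)$ for fixed $B$, the noise contributes an error of order $\tilde{O}(b^2)=\tilde{O}(K^2(\alpha/\tau)^2/\varepsilon^2)$.

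Combining the surviving terms, for any fixed $B\ge 2$ we obtain $|\hat{J}_{Lap}(\hat{x},\hat{y})-J(x,y)|=\tilde{O}\big(\sqrt{(\alpha/\tau)/K}+K^2(\alpha/\tau)^2/\varepsilon^2\big)$. The first summand decreases and the second increases in $K$, so I would balance them by choosing $K=\Theta\big(\varepsilon^{4/5}(\alpha/\tau)^{-3/5}\big)$, at which point both equal $\tilde{O}\big((\alpha/\tau)^{4/5}\varepsilon^{-2/5}\big)$, which proves the theorem. The step I expect to be the main obstacle is exactly the control of $\sum_i Z_i^2$: a sharper variance-based (Bernstein) bound would replace $\tilde{O}(Kb^2)$ by $\tilde{O}(\sqrt{K}\,b^2)$ and thereby shift the optimal trade-off, so the range bound above is precisely the estimate that reproduces the stated exponents. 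I would also verify that the truncation underlying that bound perturbs $\E[S]$ by only a polylogarithmically negligible amount, so that the unbiasedness of~\eqref{eq:estimate} is not spoiled, and keep track of the two regimes of $L$ in Lemma~\ref{lem:differences} to ensure its linear term really dominates at the chosen $K$.
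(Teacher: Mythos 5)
Your proposal is correct and follows essentially the same route as the paper's proof: the same affine reduction to bounding $\bigl|\sum_i(\hat{x}_i-\hat{y}_i)^2-\E[\cdot]\bigr|$, the same three-term decomposition into MinHash-difference, cross, and noise-energy contributions (controlled by Lemma~\ref{lem:differences} and a per-coordinate union-bound Laplace tail, which is exactly the paper's invocation of Theorem~3.8 of Dwork--Roth), and the identical balancing $K=\Theta\bigl(\varepsilon^{4/5}(\tau/\alpha)^{3/5}\bigr)$ with a fixed small $B$ yielding the stated $\tilde{O}\bigl((\alpha/\tau)^{4/5}\varepsilon^{-2/5}\bigr)$ bound.
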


\begin{proof}
    We first describe and analyze the two events which constitute the failure probability $\delta_\text{fail}$.
    Next we proceed to analyze the estimation error under the condition that none of these events occur.
    We will only analyze the case that $\hat{J}_{Lap}(\hat{x}, \hat{y})$ is larger than $J(x,y)$. 
    The other case follows by symmetry.

    First, we assume that the number of differences between $x^\ast$ and $y^\ast$ (among the $K$ functions) does not 
    differ by more than a value $L'$ from its expectation. 
    This is true for $L' = \sqrt{3 \ln (2/\delta_\text{fail})\left(1 - \frac{1}{B}\right)  K\alpha/\tau} $ by Lemma~\ref{lem:differences} for failure probability $\delta_\text{fail} / 2$. 
    Second, we use Theorem~3.8 in~\cite{dwork2014algorithmic} (reproduced in Appendix~\ref{app:laplace:utility}) that says that with probability at least
    $1 - \delta_\text{fail} / 4$, the maximum absolute difference in a coordinate of $\hat{x}$ compared 
    to $x^\ast$ is at most $D = \ln(4K/\delta_\text{fail}) \Delta/\varepsilon$. 

    By a union bound, with probability at least $1-\delta_\text{fail}$ none of these events occur, i.e., we observe a deviation of at most $L'$ in the number of differences of two vectors $x^\ast$ and $y^\ast$ from their expectation, 
    and the Laplace noise added to both $x^\ast$ and $y^\ast$
    keeps all coordinates within $D$ in their absolute value.
    Under this condition, we will study the value $|X - \E[X]|$ for the random variable $X = \sum_{i = 1}^K (\hat{x}_i - \hat{y}_i)^2$.
    If this value is at most $t$, the absolute estimation error is at most
    $\frac{6t}{(B-1)(B+1)K}$, cf. \eqref{eq:estimate}.

    As in the proof of Lemma~\ref{lem:laplace:unbiased}, we split up $\hat{x}_i$ into $x^\ast_i$ and $N_i$ to calculate
        $\sum_{i = 1}^K (\hat{x}_i - \hat{y}_i)^2 = \sum_{i = 1}^K \left((x^\ast_i - y^\ast_i)^2 + 2 (x^\ast_i - y^\ast_i) (N_i - N_j) + (N_i - N_j)^2\right).$
    By our second condition, we may assume that $\vert N_i - N_j \vert \leq 2D$, which means that the last summand is at most $4KD^2$ over the whole sum.
    For the first summand, we use the first condition that says that the number of observed differences is within $L'$ from its expectation. 
    Since each individual term in the sum contributes
    at most $(B-1)^2$, the deviation from the expectation over the whole sum is not more than $(B-1)^2 L'$. 
    Lastly, using both conditions, the contribution of the middle term over the whole sum is bounded by $ 4D (K\alpha/\tau (1-1/B) + L')$.

    Using $P := K\alpha/\tau (1-1/B)$ and rewriting $L' = \sqrt{3 P \ln(2/\delta_\text{fail})}$, we can put the observations from above together and conclude that with probability at least $1-\delta_\text{fail}$ the estimation error is
    \[
        O\left(\frac{(B-1)^2\sqrt{3P \ln(2/\delta_\text{fail})} + KD^2 + D (P + \sqrt{3P \ln(2/\delta_\text{fail})})}{(B-1)(B+1)K}
        \right).
    \]

    Comparing the second and the third term of the sum, we notice that $D > (B-1)(P + \sqrt{P})$ for $\varepsilon > 1$, so the second term is always larger than the third and we may bound the estimation error by
        $O\left(\frac{(B-1)^2\sqrt{3P \ln(2/\delta_\text{fail})} + KD^2}{(B-1)(B+1)K}\right).$
    The function $(x-1)^2/((x - 1)(x+1))$ is monotonically increasing for $x \geq 1$, so the choice $B = 2$ minimizes the expression above. Now, observe that the 
    first term is $O(\sqrt{\alpha/(K\tau)})$ and the second term is $\tilde{O}((\alpha K/\tau \varepsilon)^2)$, where the tilde notation suppresses the logarithmic dependence on $K$. To balance the estimation error, 
    we set these terms in relation to each other and solve for $K$. This shows that the asymptotic minimum 
    is achieved for $K = \varepsilon^{4/5}(\tau/\alpha)^{3/5}$.
    Using this value to bound the estimation error results in the bound stated in the theorem.
\end{proof}

\myparagraph{Comparing \texttt{RRMinHash} and \texttt{NoisyMinHash}} 
Comparing Thm.~\ref{thm:utility:rr:minhash} to Thm.~\ref{thm:utility:noisy:minhash}, both
analyses provide bounds on the absolute error in terms of the length $\tau$ of individual vectors, the neighboring notion $\alpha$, and the privacy budget $\varepsilon$.

Since the value $\alpha/\tau$ is between 0 and 1, the contribution of $(\alpha/\tau)^{4/5}$ to the error of \texttt{NoisyMinHash} (Theorem~\ref{thm:utility:noisy:minhash}) is smaller than the term $(\alpha/\tau)^{1/2}$ for \texttt{RRMinHash}. However, the $\varepsilon^{-1/2}$ dependence of \texttt{RRMinHash} is better than $\varepsilon^{-2/5}$ for $\varepsilon \geq 1$. 
This should mean that while \texttt{NoisyMinHash} might guarantee smaller error for small epsilon settings, the error decreases faster for \texttt{RRMinHash}.

In both mechanisms, the preprocessing time to generate a private vector is $O(K\tau)$ for a vector with $\tau$ set bits.
It consists of evaluating $K$
range-$B$ MinHash functions (each taking time $O(\tau)$) and sampling
$O(K)$ values from a uniform (\texttt{RRMinHash}) or Laplace distribution (\texttt{NoisyMinHash}). The similarity estimation of two vectors takes time $O(K)$. 
A private vector for \texttt{RRMinHash} consists of $K$ bits and the similarity estimation uses the Hamming distance,
while \texttt{NoisyMinHash} uses $K$ floating point values and uses Euclidean distance as basis for similarity estimation. 
Given the difficulty of correctly implementing the Laplace Mechanism~\cite{mironov2012significance}, \texttt{RRMinHash} has a simpler basis for a correct implementation. 

\mysection{Experimental Evaluation}
\label{sec:experiments}

All algorithms described in this paper where implemented in Python 3. 
The code, raw results, and evaluation notebooks can be accessed at \url{https://github.com/maumueller/ldp-jaccard}. 
Due to space restrictions, we only present a few selected results. 
See the Jupyter notebook at the web page for additional plots and tables.

\myparagraph{Experimental setting} 
We conduct experiments in two different directions. 

First, we create 
artificial vectors and test how well the algorithms estimate Jaccard similarity for a fixed privacy budget. 
We use the \emph{mean absolute error}
as our quality measure, which is defined as $\frac{1}{\ell}\sum_{i = 1}^{\ell} \vert d_i - e_i \vert$ for true similarities
$d_1, \ldots, d_\ell$ and their estimates $e_1, \ldots, e_\ell$ returned by the algorithm. In the experiment,
we create user vectors $x$ with $\tau \in \{20, 50, 100, 250, 500, 1000, 2000\}$ entries. 
For each such $x$, 
we create vectors $y'$ with $\tau$ entries and Jaccard similarity in $\{0.1, 0.5, 0.9\}$ to $x$. 
The number $K$ of hash functions considered is chosen from $\{10, 20, 30, \ldots, 500\}$. 
For each algorithm, we vary the privacy budget and internal parameters such as the range $B$ of the MinHash functions. 
All runs were repeated 100 times with random hash functions.

Second, we study how well these algorithms work on real-world datasets. 
Following~\cite{yanlocally}, we chose
the MovieLens and Last.FM dataset available at \url{https://grouplens.org/datasets/hetrec-2011}. 
We obtain a set representation by collecting all movies rated at least 4 (MovieLens, $m$ = 65\,536) and 
the top-20 artists (Last.FM, $m$ = 18\,739 ). 
The average set size is 178.1 ($\sigma = 187.5$) and 19.8 ($\sigma = 1.78$), respectively. 
To account for the influence of the size of the user vectors, we create different versions of these datasets. From the MovieLens dataset, we make three versions containing all users that have at least 50, 100, and 500 entries, respectively. This results in datasets with 1636, 1205, and 124 users.
 From the Last.FM dataset, we collect all users that have at least 20 entries which amounts to 1860 users. For each dataset, we take 50 query points at random among
all data points for which the 10-th nearest neighbor has at least Jaccard similarity 0.1. As quality measure, we use recall@$k$ (R@$k$) which measures the average number of times that the (index of the) true nearest neighbor is found among the first $k$ nearest neighbors in the private vectors. (Note that the true vectors are not revealed, so there cannot be a re-ranking step as is tradition in nearest neighbor search.) Moreover, we report on the approximate similarity ratio, which is defined as the ratio of the sum of similarities to the 10 original nearest neighbors, and the sum of similarities to the 10 nearest neighbors among the private vectors computed with their original similarities. 

For the whole evaluation, we will use range-2 MinHash (i.e., 1-bit MinHash~\cite{LiK10}) with $K \leq 100$ as a baseline for comparison. 
For all experiments, we set $\alpha = 1$, i.e. we allow for a single item change. 
Results for other values can be read off the plots by looking at different $\tau$ values. For example,
a combination $(\alpha = 1, \tau = 500)$ is identical to $(\alpha = 10, \tau = 50)$
since all bounds depend on the ratio of $\alpha$ and $\tau$. For all private mechanisms, we 
use $\delta = 0.0001$. 

\subsection{Result Discussion on Artificial Data}

\begin{figure}[t]
    \includegraphics[width=0.5\textwidth]{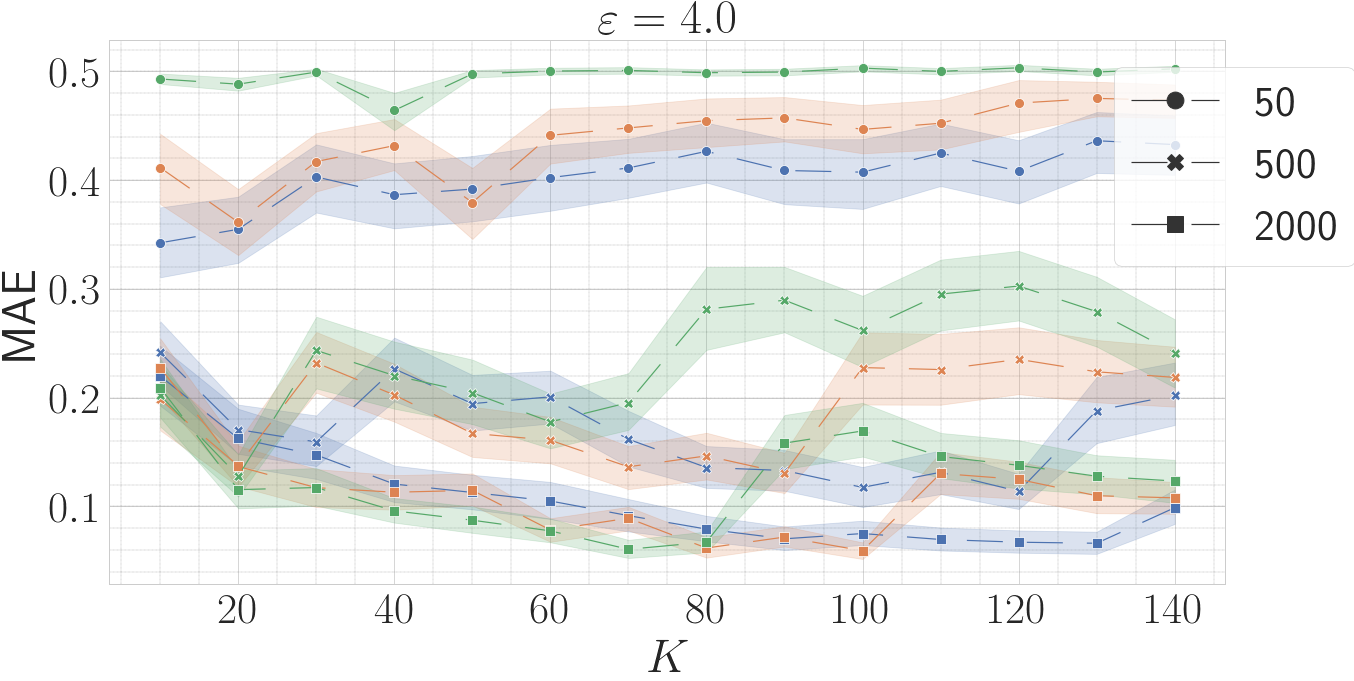}
    \includegraphics[width=0.5\textwidth]{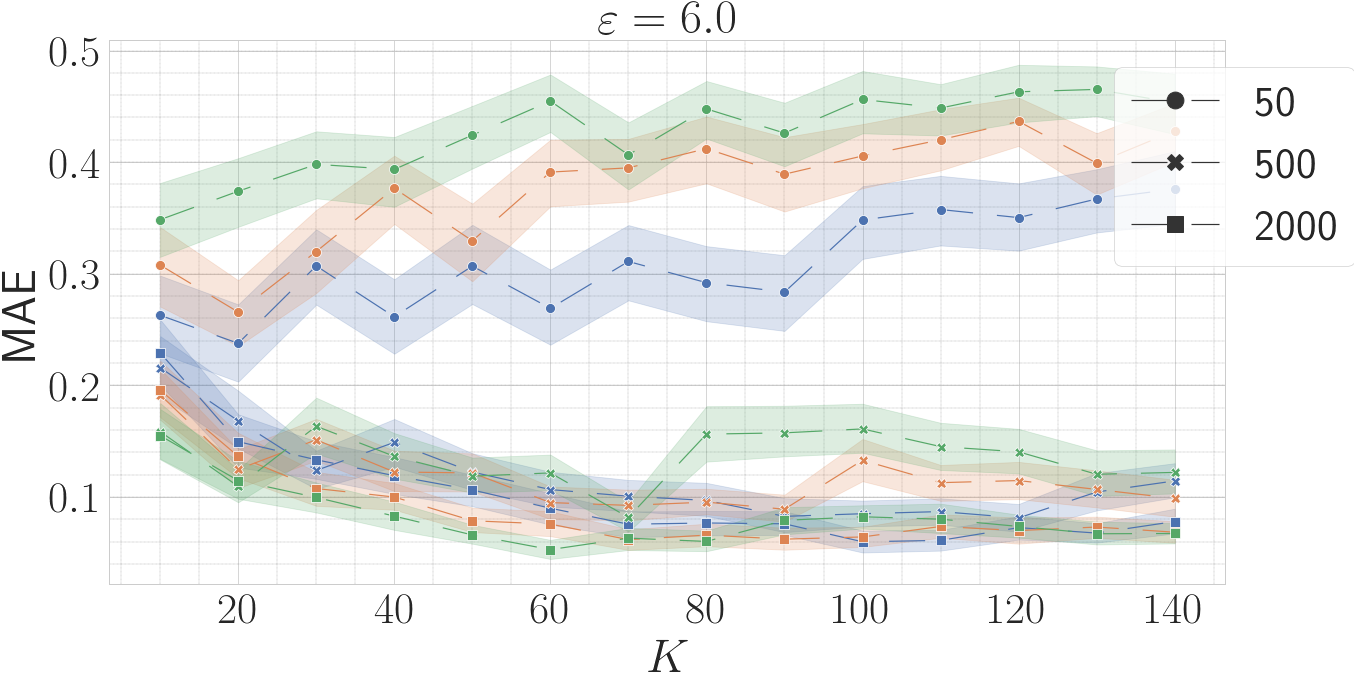}
    \includegraphics[width=0.5\textwidth]{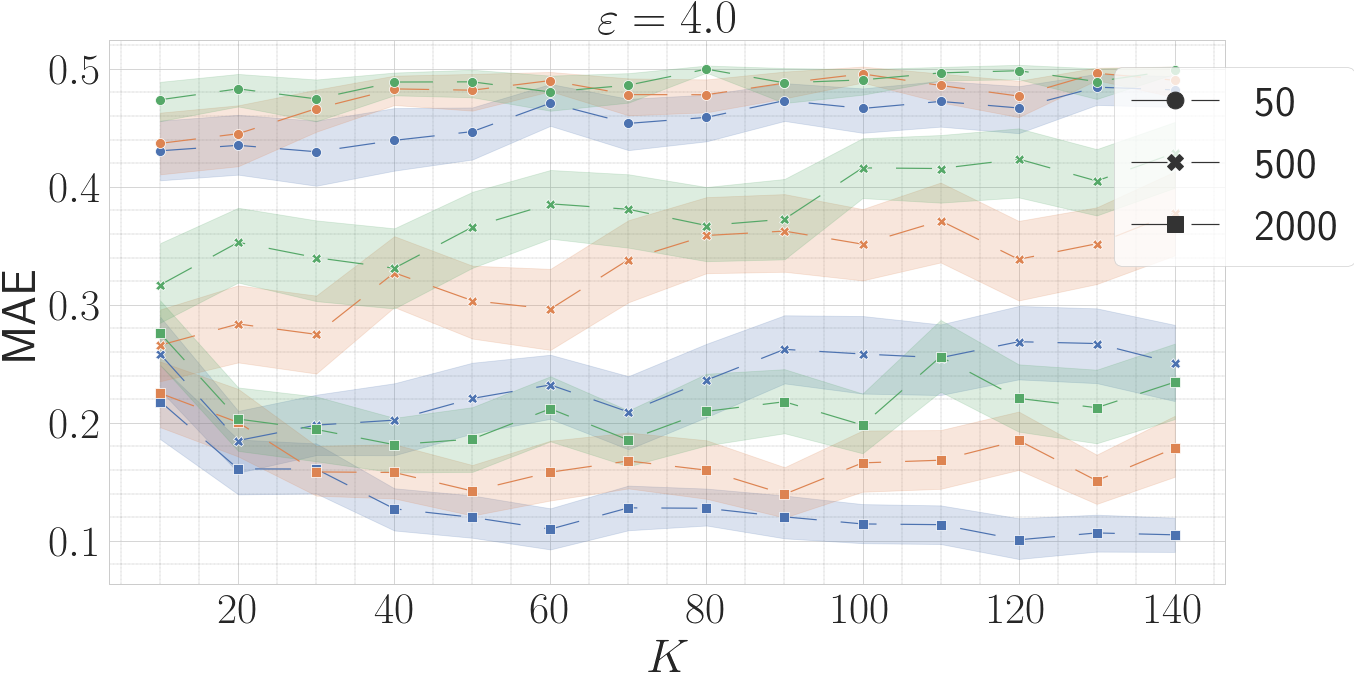}
    \includegraphics[width=0.5\textwidth]{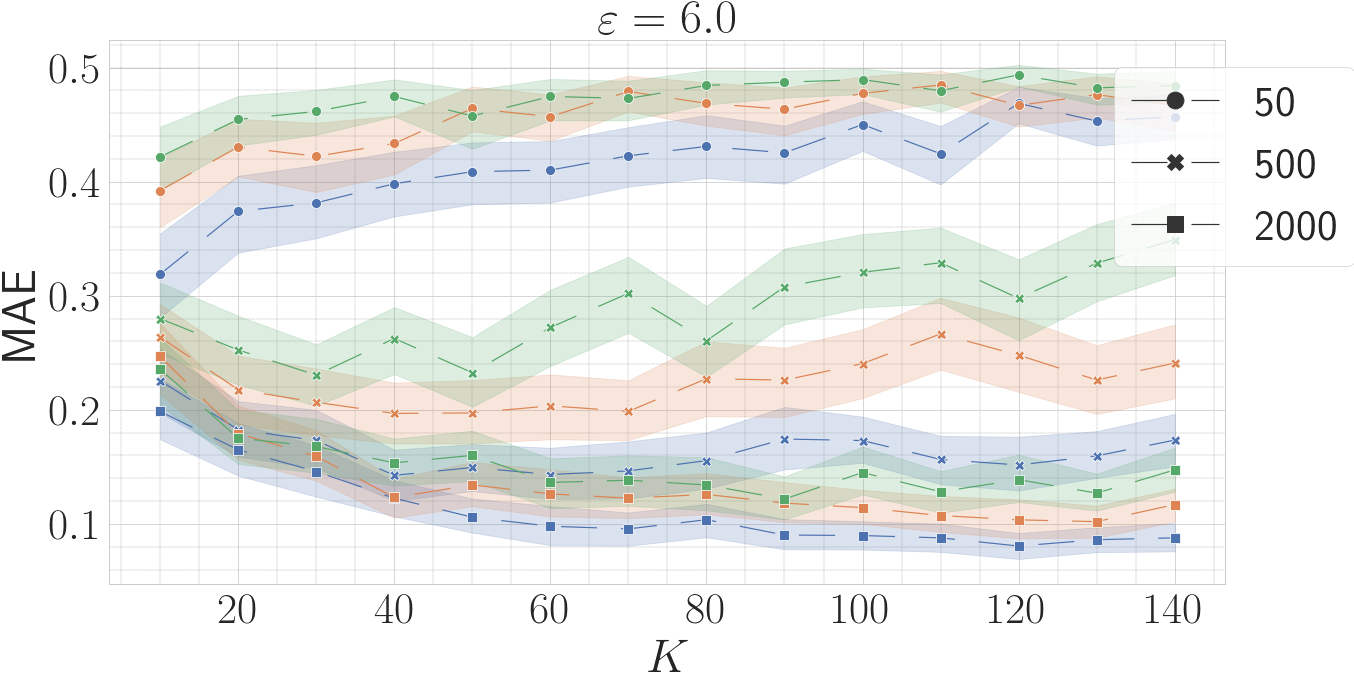}
    \caption{Results on synthetic vectors with $\tau \in \{50, 500, 2000\}$,  privacy budget
    $\varepsilon \in \{4, 6\}$, and vectors with Jaccard similarity of 0.5. Blue, red, green lines represent runs with choice of 2, 3, 5 for $B$, respectively; top: \texttt{RRMinHash}, bottom: \texttt{NoisyMinHash}.}
    \label{fig:artificial}
\end{figure}

Figure~\ref{fig:artificial} visualizes the mean absolute error (with standard deviation as error bars) for runs of \texttt{RRMinHash} (top) and \texttt{NoisyMinHash} (bottom) for privacy budgets of $\varepsilon = 4$ (left) and $\varepsilon = 6$ (right) and choices 2, 3, 5 of the $B$ parameter (blue/red/green lines). With respect to \texttt{RRMinHash} and a privacy budget of $\varepsilon = 4$, we notice that for each choice of $\tau$ the trend is that smaller $B$ values produce smaller absolute error, which is in accordance with our analysis in Section~\ref{sec:grr}. (Larger $B$ values can be found on the supplemental website; they performed much worse.) For vectors of 50 elements, the smallest MAE error is achieved with the smallest choice of $K$, resulting in an MAE of around 0.35. The error shrinks to around 0.15 for 500 elements (with $K$ of around 20), and 0.05 for vectors with 2\,000 elements (with $K$ around 80). 
The linear increase of $K$ with $\tau$ further motivates the choice of $K$ in Theorem~\ref{thm:utility:rr:minhash}. Increasing the privacy budget to $\varepsilon = 6$ further decreases 
the error but results in the same trends. We note that a growing privacy budget also 
corresponds to a larger $K$ choice, again as motivated in Theorem~\ref{thm:utility:rr:minhash}.
Increasing $K$ will sometimes result in worse error because of integer constraints in Lemma~\ref{lem:grr:on:budget}. From a practical point of view, one should choose $K$ as large as possible before this increase occurs. The trends are identical with regard to \texttt{NoisyMinHash}, but it is much clearer that a smaller choice of $B$ is preferable (as motivated in the proof of Theorem~\ref{thm:utility:noisy:minhash}). We achieve an MAE of around 0.43, 0.18, 0.1 
for vectors of size 50, 500, 2000 and $\varepsilon = 4$, respectively, slightly worse than \texttt{RRMinHash}.

\begin{figure}[t]
    \includegraphics[width=0.5\textwidth]{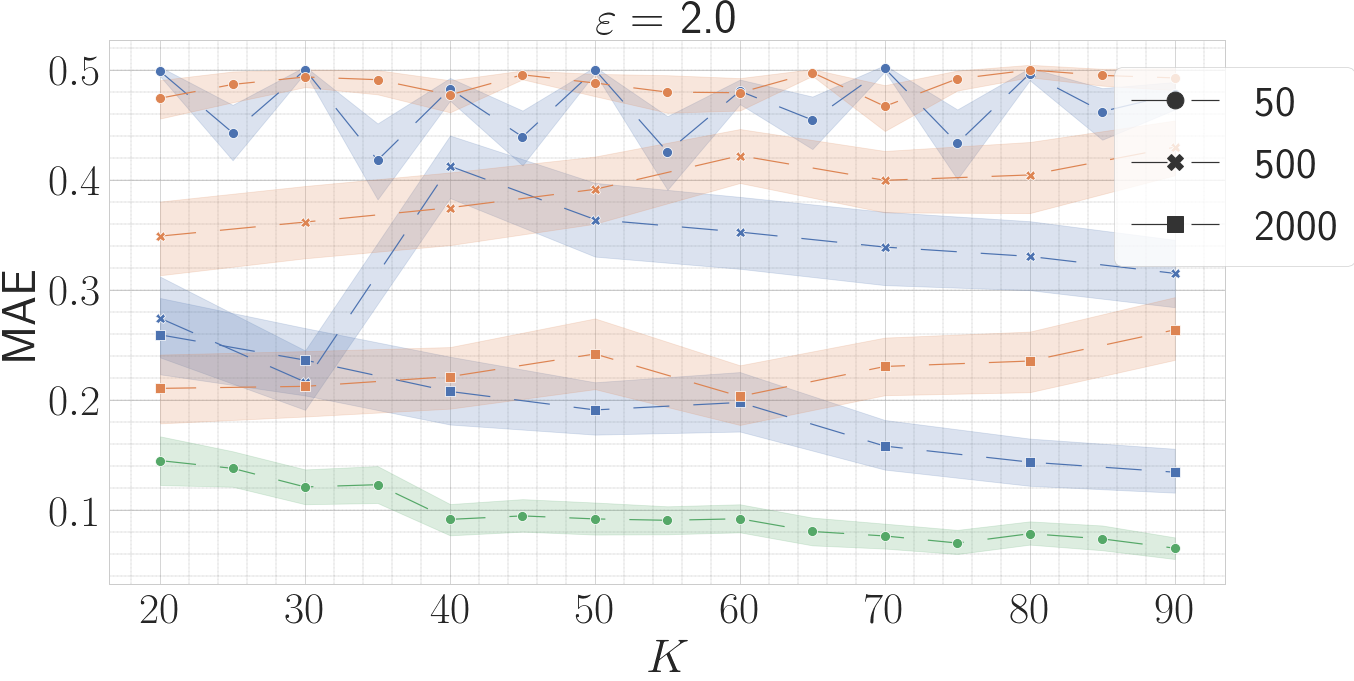}
    \includegraphics[width=0.5\textwidth]{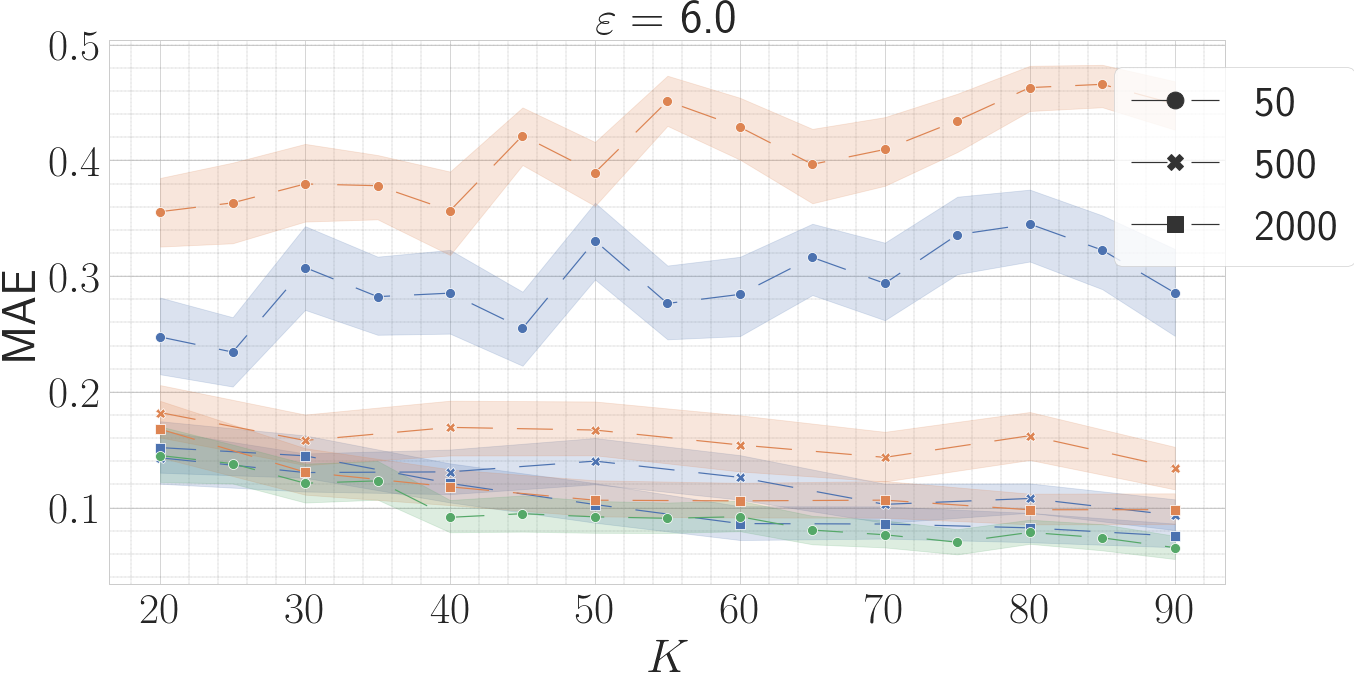}
    \caption{Results on synthetic vectors with $\tau \in \{50, 500, 2000\}$,  privacy budget
    $\varepsilon \in \{2, 6\}$, and vectors with Jaccard similarity of 0.5. Blue, red, green lines represent runs with \texttt{RRMinHash}, \texttt{NoisyMinHash}, \texttt{Range-2 MinHash} (non-private), respectively. There is only one line for MinHash because
    its error is independent of the vector size.}
    \label{fig:artificial:final}
\end{figure}
Figure~\ref{fig:artificial:final} sets our two mechanisms in relation to MinHash with $B=2$ and a privacy budget of $2$ (left) and $6$ (right). 
For $\varepsilon = 2$, we need large vectors to guarantee an error that is roughly a factor of two larger than that achieved by MinHash. For $\varepsilon = 6$, both larger vectors allow for an estimation vector that is nearly as small as MinHash. Again, \texttt{RRMinHash} achieves smaller error than \texttt{NoisyMinHash}, in particular for larger privacy budgets.

We conclude that \texttt{RRMinHash} with $B=2$ is a good choice in all considered experiments on artificial data. For small privacy budget, large user vectors are needed to get small estimation errors. A larger privacy budget allows to accommodate smaller vectors.

\subsection{Results on Real-World Data}

\begin{table}[t]
\begin{tabular}{l l c c c c}
Dataset & Algorithm & R@10 & R@50 & R@100 & Approx \\ \hline
Last.FM ($\tau = 20$) & MinHash & 0.42 & 0.72 & 0.82 & 0.55 \\
                      & RRMinHash & 0.04 / 0.16 & 0.15 / 0.38 & 0.25 / 0.51 & 0.19 / 0.35  \\ 
                      & NoisyMinHash & 0.03 / 0.06 & 0.11 / 0.19 &  0.19 / 0.31 &  0.16 / 0.23\\[.2em]
MovieLens ($\tau = 50$) & MinHash & 0.13 & 0.34 & 0.47 & 0.61 \\
                        & RRMinHash & 0.02 / 0.05 & 0.09 / 0.17 & 0.18 / 0.28 & 0.49 / 0.53 \\
                        & NoisyMinHash & 0.01 / 0.02 & 0.05 / 0.11 & 0.11 / 0.21 & 0.49 / 0.50 \\[.2em]
MovieLens ($\tau = 100$) & MinHash & 0.31 & 0.62 & 0.75 & 0.72 \\
                         & RRMinHash & 0.04 / 0.07 & 0.12 / 0.24 & 0.22 / 0.36 & 0.52 / 0.57 \\
                         & NoisyMinHash & 0.04 / 0.04 & 0.09 / 0.15 & 0.18 / 0.27 & 0.52 / 0.54 \\[.2em]
MovieLens ($\tau = 500$) & MinHash & 0.58 & 0.93 & 0.99 & 0.83 \\
                         & RRMinHash & 0.19 / 0.31 & 0.65 / 0.76 & 0.91 / 0.96 & 0.72 / 0.76 \\
                         & NoisyMinHash & 0.14 / 0.25 & 0.63 / 0.69 & 0.90 / 0.92 &  0.71 / 0.73 \\
\end{tabular}
\caption{Results on real-world datasets for different quality measures and privacy budget $\varepsilon$ of 4 and 8 (split up via ``/'' in individual cells).}
\label{table:real:world}
\end{table}

Table~\ref{table:real:world} summarizes the observed results for runs on the Last.FM and MovieLens datasets. Again, we set MinHash in relation to \texttt{RRMinHash} and \texttt{NoisyMinHash}. 
Motivated by the observations above we 
only discuss the case $B=2$.

We observe that \texttt{RRMinHash} achieves equal or better quality than \texttt{NoisyMinHash} in all measurements, so we focus the comparison on MinHash and \texttt{RRMinHash}. 
First, we note that the datasets are rather difficult. Even standard MinHash with $B=2$ does not 
achieve close to perfect recall, which means that all vectors are rather close to each other. 
The Last.FM dataset provides very small user vectors. Accordingly there is a big difference between the quality achieved by the two algorithms. For a privacy budget of $\varepsilon = 4$, 
the quality is between a factor of around 10 (R@10) and of around 3 (R@100, Approx) worse if solving the similarity search task on private vectors. For a privacy budget of $\varepsilon = 8$, these factors shrink to 1.5-3. With regard to MovieLens, we observe that it is difficult for MinHash to achieve high recall values for $\tau = 50$. Results for \texttt{RRMinHash} are again a factor 3-6 worse for privacy budget 4, with the exception of the relative approximation that is rather close (0.49 vs. 0.61). Quality increases slowly from 50 to 100 items, and rapidly for 500 items (because of its small size).

We summarize that there is a clear trade-off between the utility and privacy of the proposed mechanisms. The results on artificial and real-world data show that to ensure good utility under a small privacy budget, user vectors have to contain many items, say in the 100s. Many of the theoretical choices translated well into practice. Most interestingly, while the upper bounds in the theory section painted an unclear picture about the utility at a fixed privacy budget, our empirical analysis clearly suggests that \texttt{RRMinHash} is both easier to implement and achieves higher utility for the same privacy budget.

\mysection{Related Work}
The paper by Kenthapadi~\cite{Kenthapadi13} shows how to
estimate vector differences under the $\ell_2$ norm in a differentially private setting in the centralized model of differential privacy.
More precisely, their algorithm has privacy guarantees with respect to a single
element change (i.e., one user changes one item). In very recent work, Dhaliwal et
al.~\cite{Dhaliwal19} show how to achieve the same guarantees when the
privacy-guarantees are over the change of a fraction of a user vector in the central model. Both approaches 
apply a Johnson-Lindenstrauss transform~\cite{JohnsonL84} and add noise of a certain
scale to the resulting matrix. Our \texttt{NoisyMinHash} approach can be seen as a natural generalization of their method, but there are some stand-alone features such as the mapping to $B$ buckets.

With respect to similarity estimation under Jaccard similarity, 
the paper by Riazi et al.~\cite{Riazi16} describes a privacy-preserving approach
for similarity estimation both for inner product similarity (using
SimHash~\cite{Charikar02}) and Jaccard similarity (using MinHash). Their 
privacy notion does not satisfy differential privacy.

The paper by Yan
et al.~\cite{yanlocally} is closest to our approach. 
It discusses an LDP approach based on MinHash by 
selecting certain hash values in a differentially private manner
using the exponential mechanism. As we argue in Appendix~\ref{app:yan},
their approach does not provide the guarantees they state and quickly
degrades to a basic MinHash approach without noise addition.

Concurrent to our work, Pagh and Stausholm~\cite{pagh2020efficient} describe 
LDP sketches for approximating the number of items in a set. Their sketches are \emph{linear},
which allows them to approximate the size of the union and the intersection of two sets, and thus their Jaccard similarity. In contrast to our bounds, their bounds rely on the universe size of set elements. It would be interesting to compare their mechanism to ours in a practical setting, in particular because their lower-order error terms~\cite[Theorem 1]{pagh2020efficient} suggest that they need much larger vectors than the ones considered in our empirical study in Section~\ref{sec:experiments}.

Finally, this paper studied the privacy/utility-tradeoff achievable with our proposed methods. While an important issue, it does not discuss (un)desirable privacy budgets, which will be application-specific and lack consensus~\cite{Dwork_Kohli_Mulligan_2019}.

\vspace{-1em}
\bibliographystyle{splncs03}
\bibliography{lit}

\begin{thebibliography}{10}
\providecommand{\url}[1]{\texttt{#1}}
\providecommand{\urlprefix}{URL }

\bibitem{bro97b}
Broder, A.Z.: On the resemblance and containment of documents. In: Compression
  and Complexity of Sequences 1997. Proceedings. pp. 21--29. IEEE (1997)

\bibitem{carter1979universal}
Carter, J.L., Wegman, M.N.: Universal classes of hash functions. Journal of
  computer and system sciences  18(2),  143--154 (1979)

\bibitem{Charikar02}
Charikar, M.: Similarity estimation techniques from rounding algorithms. In:
  Proc. 34th {ACM} Symposium on Theory of Computing (STOC). pp. 380--388 (2002)

\bibitem{Dhaliwal19}
Dhaliwal, J., So, G., Parker{-}Wood, A., Beck, M.: Utility preserving secure
  private data release. CoRR  abs/1901.09858 (2019),
  \url{http://arxiv.org/abs/1901.09858}

\bibitem{dp09}
Dubhashi, D.P., Panconesi, A.: Concentration of Measure for the Analysis of
  Randomized Algorithms. Cambridge University Press (2009)

\bibitem{Dwork_Kohli_Mulligan_2019}
Dwork, C., Kohli, N., Mulligan, D.: Differential privacy in practice: Expose
  your epsilons! Journal of Privacy and Confidentiality  9(2) (Oct 2019)

\bibitem{DworkMNS06}
Dwork, C., McSherry, F., Nissim, K., Smith, A.D.: Calibrating noise to
  sensitivity in private data analysis. In: Theory of Cryptography. pp.
  265--284 (2006)

\bibitem{dwork2014algorithmic}
Dwork, C., Roth, A., et~al.: The algorithmic foundations of differential
  privacy, vol.~9. Now Publishers, Inc. (2014)

\bibitem{JohnsonL84}
Johnson, W.B., Lindenstrauss, J.: Extensions of lipschitz mappings into a
  hilbert space. Contemporary mathematics  26(189-206), ~1 (1984)

\bibitem{KearnsRoth19}
Kearns, M., Roth, A.: The ethical algorithm. Oxford (2019)

\bibitem{Kenthapadi13}
Kenthapadi, K., Korolova, A., Mironov, I., Mishra, N.: Privacy via the
  johnson-lindenstrauss transform. Journal of Privacy and Confidentiality  5(1)
  (2013)

\bibitem{LiK10}
Li, P., K{\"{o}}nig, A.C.: b-bit minwise hashing. In: {WWW} 2010. pp. 671--680
  (2010)

\bibitem{mironov2012significance}
Mironov, I.: On significance of the least significant bits for differential
  privacy. In: {ACM CCS}. pp. 650--661 (2012)

\bibitem{pagh2020efficient}
Pagh, R., Stausholm, N.M.: Efficient differentially private {F\_0} linear
  sketching (2020)

\bibitem{Riazi16}
Riazi, M.S., Chen, B., Shrivastava, A., Wallach, D.S., Koushanfar, F.:
  {Sub-Linear Privacy-Preserving Near-Neighbor Search with Untrusted Server on
  Large-Scale Datasets} (2016), arXiv:1612.01835

\bibitem{wang2017locally}
Wang, T., Blocki, J., Li, N., Jha, S.: Locally differentially private protocols
  for frequency estimation. In: {USENIX} Security. pp. 729--745 (2017)

\bibitem{yanlocally}
Yan, Z., Wu, Q., Ren, M., Liu, J., Liu, S., Qiu, S.: Locally private jaccard
  similarity estimation. Concurrency \& Computation: Practice \& Experience p.
  e4889 (2018)

\end{thebibliography}

\appendix

\section{Review of ``Locally Private Jaccard Similarity estimation'' by Yan et al.}
\label{app:yan}
In this section, we will consider the approach of Yan et al.~\cite{yanlocally}. We will first describe their approach, and then 
discuss short-comings of their analysis. 

\subsection{Similarity Estimation in~\cite{yanlocally}}

The approach of Yan et al. in~\cite{yanlocally} works as follows. 
First, choose $K$ MinHash functions $h_1, \ldots, h_K$ and distribute them to the users. Let $x_u$ be the user vector containing $N$ elements. For each $i \in \{1, \ldots, K\}$, do the following: (i) apply  hash function $h_i$ to the user vector,  keep track of the order of the $N$ elements under $h_i$, and (ii)
choose an element using the Exponential mechanism~\cite{dwork2014algorithmic} with utility function that gives utility $N - 1$ to the first (smallest) element, and goes down to utility $0$ for the last element under the random order. The response of the user is the $K$ elements chosen in this way.

Given two responses $\hat{x}$ and $\hat{y}$, \cite{yanlocally} returns the value $\vert\hat{x} \cap \hat{y}\vert / K$.  

\subsection{Criticism}

We will focus on the following main issues:

\begin{enumerate}
    \item Their profile perturbation using the expontial mechanism is not $(\varepsilon, 0)$-differentially private.
    \item Their estimation algorithm is not an unbiased estimator of the Jaccard similarity.
    \item Their self-adaption mechanism quickly degrades to pure MinHash.  
\end{enumerate}

With regard to the first point, we note that each user applies the exponential mechanism to choose among their set of values, i.e., the actual hash values obtained for their vector. All values different from those have zero probability of being chosen. This, however, cannot be differential private, quoting~\cite[Page 38]{dwork2014algorithmic}:

\begin{quote}
    It is important that the range of \emph{potential} prices [values] is independent of the actual bids [the hash values which were observed for the user]. Otherwise there would exist a price [value] with non-zero weight in one dataset and zero weight in a neighboring dataset, violating differential privacy. 
\end{quote}

For their approach, this means that if two user vectors differ in exactly one element, there is a non-zero probability that this value is picked for one vector, and a zero probability in the other vector (because it is not present). This violates differential privacy.

With regard to the second point, just returning the number of collisions (over $K$) in the perturbed sketches is not an unbiased estimator. 
It is well-known that if $x$ and $y$ have Jaccard similarity $J(x,y)$, the collision probability under a single minhash value is exactly $J(x,y)$. Repeating the process $K$ times, let $d$ be the number of hash collisions. The unbiased estimator is then $d/K$, with an error of $\sqrt{1/K}$. This is the value that they were to report in their Definition~2. However, this process does not generalize if they pick values using the Exponential mechanism. For example, if one vector reports the second-smallest hash value, and the other reports the smallest hash value, the collision probabiltiy is not $J(x,y)$. It is easy to see that their estimate does not reflect the Jaccard similarity in their plots, e.g., Figure~3 (A)--(C). The baseline does not recover the distance, independent of the epsilon value. 

The question does remains why they get good results for their other approaches in their Figure~3. This brings us to 3.: their so-called ``self-adaptation''. Using self-adaption, they restrict the space of choices for the Exponential mechanism even further. Given the parameter choices they 
mention at~\cite[Page 8]{yanlocally}, for $\varepsilon = 1$, the exponential mechanism chooses only between the smallest and second-smallest element, for $\varepsilon \geq 1.2$ it chooses only from the set containing only the smallest hash value. Since this defaults to just using MinHash, it is not surprising that they measure a small estimation error. However, the sketch is just the MinHash sketch, and since all hash functions are shared, this clearly violates differential privacy: the mechanism is \emph{deterministic} at this point. 

\section{A Useful Utility Bound for the Laplace Mechanism}
\label{app:laplace:utility}

For completeness, we reproduce Theorem~3.8 in~\cite{dwork2014algorithmic} with the notation of our setup. 

\begin{theorem}
Let $f\colon \{0,1\}^m \rightarrow [B]^K$, and let $y = f(x) + (Y_1, \ldots, Y_K)$ with $Y_i \sim \textnormal{Lap}(\Delta/\varepsilon)$. Then for all $\delta \in (0, 1]$:
\[
    \Pr\left[\|f(x) - y\|_\infty \geq \ln (K/\delta) \frac{\Delta}{\varepsilon}\right] \leq \delta.
\]
\end{theorem}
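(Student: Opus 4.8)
The plan is to observe that the statement is purely a tail bound on the maximum of $K$ Laplace random variables. By the definition $y = f(x) + (Y_1, \ldots, Y_K)$ we have $f(x) - y = -(Y_1, \ldots, Y_K)$, so $\|f(x) - y\|_\infty = \max_{i \in [K]} |Y_i|$. The target inequality therefore becomes $\Pr[\max_i |Y_i| \geq \ln(K/\delta)\,\Delta/\varepsilon] \leq \delta$, and I would prove it by combining a single-coordinate tail estimate with a union bound over the $K$ coordinates.

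First I would establish the two-sided tail of a single Laplace variable. Writing $b = \Delta/\varepsilon$ for the scale, the density of $Y_i \sim \text{Lap}(b)$ is $\tfrac{1}{2b} e^{-|z|/b}$, so for any threshold $t \geq 0$ a direct integration gives $\Pr[|Y_i| \geq t] = 2 \int_t^\infty \tfrac{1}{2b} e^{-z/b}\,dz = e^{-t/b}$. This closed form is the only analytic input needed.

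Next I would instantiate the threshold. Setting $t = \ln(K/\delta)\, b$ makes the per-coordinate tail equal to $e^{-\ln(K/\delta)} = \delta/K$. A union bound over the $K$ coordinates then yields $\Pr[\max_i |Y_i| \geq t] = \Pr[\exists i\colon |Y_i| \geq t] \leq \sum_{i=1}^K \Pr[|Y_i| \geq t] = K \cdot (\delta/K) = \delta$, which is exactly the claim, since $\max_i |Y_i| \geq t$ is equivalent to $\|f(x)-y\|_\infty \geq \ln(K/\delta)\,\Delta/\varepsilon$.

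There is no real obstacle here: the argument is elementary and does not even require the independence of the $Y_i$, as the union bound is agnostic to their joint distribution. The only points I would state carefully are that $\ln(K/\delta) \geq 0$ for $\delta \in (0,1]$ and $K \geq 1$, so the threshold $t$ is nonnegative and the single-variable tail formula applies, and that the choice $t = \ln(K/\delta)\,b$ is precisely calibrated so that the per-coordinate failure probability $\delta/K$ absorbs the union-bound factor of $K$.
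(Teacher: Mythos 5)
Your proof is correct: the paper itself states this result without proof, reproducing it as Theorem~3.8 of Dwork and Roth's monograph, and your argument --- the exact Laplace tail $\Pr[|Y_i| \geq t] = e^{-t\varepsilon/\Delta}$ combined with a union bound over the $K$ coordinates at threshold $t = \ln(K/\delta)\,\Delta/\varepsilon$ --- is precisely the standard proof given in that cited source. Your added remarks that independence of the $Y_i$ is not needed and that $\ln(K/\delta) \geq 0$ ensures the tail formula applies are accurate and complete the argument.
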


\section{A Running Example}
\label{app:example}

\begin{figure}[t]
    \includegraphics[width=\textwidth]{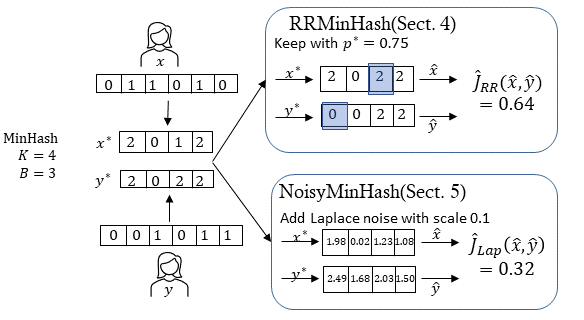}
    \caption{Overview of the algorithms presented in this work. Two users whose vectors have Jaccard similarity 0.5 apply 4 range-3 MinHash functions to their vectors of length $m=6$. In the top row, each user uses
    randomize response with flip probability $1/4$ to generate their private vectors. 
    In the bottom row, they use noise drawn from a Laplace distribution. The result 
    of the estimation is shown to the right. }
    \label{fig:scheme}
\end{figure}

Figure~\ref{fig:scheme} provides a running example of our setup and the different mechanisms. We consider two users that can have items in the set $\{1, \ldots, 6\}$. User $x$ has items $\{2,3,5\}$, user $y$ has items $\{3, 5, 6\}$. Their Jaccard similarity is thus $2/4=0.5$. Users have access to $K=4$ MinHash functions that map to $B = 3$ buckets $\{0,1,2\}$. 
Applying these functions to user $x$ gives the sketch $x^\ast = (2, 0, 1, 2)$, applying them to $y$'s user results in $y^\ast = (2, 0, 2, 2)$. 
These two vectors are made private using either Generalized Randomized Response or addition of Laplace noise. 

In the first case, consider a probability of $p^\ast = 3/4$, which means that each user randomizes their answer with probability $1/4$. For $x^\ast$, the third element is flipped, and the random choice from $\{0,1,2\} \setminus \{1\}$ results in the answer 2. The private vector $\hat{x}$ is thus $(2,0,2,2)$. For $y^\ast$, the first element is flipped and the random choice among $\{0,1,2\} \setminus \{2\}$ results in 0. The answer $\hat{y}$ is thus $(0,0,2,2)$. We count 2 collisions between $\hat{x}$ and $\hat{y}$, and plugging this into the estimation formula~\eqref{eq:grr:estimate} gives an estimate of 0.64.

In the second case, consider that Laplace noise is added with scale 0.1.
Each user independently adds noise by drawing 4 samples from Lap(0.1) and adding the samples to the original vector. This results in the two vectors $\hat{x} = (1.98, 0.02, 1.23, 1.08)$ and $\hat{y} = (2.49, 1.68, 2.03, 1.50)$ (rounded to two digits). Computing the squared Euclidean distance between these vectors and adjusting for the parameters as stated in~\eqref{eq:estimate} results in an estimate of 0.32 for the Jaccard similarity of $x$ and $y$.

\end{document}